\newcommand{\pr}[1]{\left(#1\right)}
\newcommand{\spr}[1]{\left[#1\right]}
\newcommand{\abs}[1]{\left|#1\right|}
\newcommand{\prob}[1]{\mathbb{P}\pr{#1}}
\newcommand{\mean}[1]{\operatorname{E}\spr{#1}}
\newcommand{\var}[1]{\operatorname{Var}\spr{#1}}
\newcommand{\ncd}{\mathrm{ncd}}
\newcommand{\cd}{\mathrm{cd}}
\newcommand{\ind}[1]{\mathrm{ind}\pr{#1}}
\newcommand{\proj}[2]{\pi_{#1}\pr{#2}}
\newcommand{\corr}[1]{\mathrm{corr}\pr{#1}}
\newcommand{\funcdef}[3]{{#1}:{#2} \to {#3}}
\newcommand{\cdA}[2]{\cd_A\pr{{#1} ; {#2}}}
\newcommand{\cdR}[2]{\cd_R\pr{{#1} ; {#2}}}
\newcommand{\cdAnoarg}[1]{\cd_A\pr{{#1}}}
\newcommand{\ncdA}[2]{\ncd_A\pr{{#1} ; {#2}}}
\newcommand{\ncdAnoarg}[1]{\ncd_A\pr{{#1}}}
\newcommand{\dtname}[1]{\emph{#1}}
\newtheorem{theorem}{Theorem}
\newtheorem{conjecture}[theorem]{Conjecture}
\newtheorem{example}[theorem]{Example}
\newtheorem{definition}[theorem]{Definition}
\newtheorem{proposition}[theorem]{Proposition}
\newtheorem{corollary}[theorem]{Corollary}
\title{What is the dimension of your binary data?}
\author{
Nikolaj Tatti \hspace{1cm} Taneli Mielik{\"a}inen \hspace{1cm} Aristides Gionis \hspace{1cm} Heikki Mannila \\
HIIT Basic Research Unit, Department of Computer Science \\
University of Helsinki and Helsinki University of Technology}
\begin{document}
\maketitle

\begin{abstract}
Many 0/1 datasets have a very large number of variables; on the other
hand, they are sparse and the dependency structure of the variables is
simpler than the number of variables would suggest. Defining the
effective dimensionality of such a dataset is a nontrivial problem. We
consider the problem of defining a robust measure of dimension for 0/1
datasets, and show that the basic idea of fractal dimension can be
adapted for binary data. However, as such the fractal dimension is
difficult to interpret. Hence we introduce the concept of normalized
fractal dimension. For a dataset $D$, its normalized fractal dimension
is the number of columns in a dataset $D'$ with independent columns
and having the same (unnormalized) fractal dimension as $D$.  The
normalized fractal dimension measures the degree of dependency
structure of the data. We study the properties of the normalized
fractal dimension and discuss its computation. We give empirical
results on the normalized fractal dimension, comparing it against
baseline measures such as PCA. We also study the relationship of the
dimension of the whole dataset and the dimensions of subgroups formed
by clustering. The results indicate interesting differences between
and within datasets.
\end{abstract}

\section{Introduction}
Many 0/1-datasets occurring in data mining are on one hand complex, as
they have a very high number of columns.  On the other hand, the
datasets can be simple, as they might be very sparse or have lots of
structure.
In this paper we consider the problem of defining a notion of {\em
effective dimension} for a binary dataset.  We study ways of defining
a concept of dimension that would somehow capture the complexity or
simplicity of the dataset.  Such a notion of effective dimension can
be used as a general score describing the complexity or simplicity of
the dataset; Some potential applications of the intrinsic
dimensionality of a dataset include model selection problems in data
analysis; it can also be used in speeding up certain computations
(see, e.g.,~\cite{faloutsos94beyond}).

For continuous data there are many ways of defining the dimension of a
dataset.  One approach is to use decomposition methods such as SVD,
PCA, or NMF (nonnegative matrix
factorization)~\cite{jolliffe02pca,lee00nmf} and to count how many
components are needed to express, say, $90\%$ of the variance in the
data.  This number of components can be viewed as the number of
effective dimensions in the data.

In the aforementioned methods it is assumed that the dataset is
embedded into a higher-dimensional space by some (smooth) mapping.
The other main approach is to use a different concept, that of fractal
dimensions~\cite{barnsley88fractals,faloutsos94beyond,kegl02intristic,ott97chaos}.
Very roughly, the concept of fractal dimension is based on the idea of
counting the number of observations in a ball of radius $r$ and
looking what the rate of growth of the number is as a function of $r$.
If the number grows as $r^k$, then the dimensionality of the data can
be considered to be $k$. Note that this approach does not provide any
mapping that can be used for the dimension reduction. Such mapping
does not even make sense because the dimension can be non-integral.

Applying these approaches to binary data is not easy.  Many of the
component methods, such as PCA and SVD are strongly based on the
assumption that the data are real-valued.  NMF looks for a matrix
decomposition with nonnegative entries and hence is somewhat better
suited for binary data.  However, the factor matrices may have
continuous values, which makes them difficult to interpret.  The
component techniques aimed at discrete data (such as multinomial
PCA~\cite{buntine03mpca} or latent Dirichlet allocation
(LDA)~\cite{blei03lda}) are possible alternatives, but interpreting
the results is hard.

In this paper we explore the notion of effective dimension for binary
datasets by using the basic ideas from fractal dimensions.
Essentially, we consider the distribution of the pairwise distances
between random points in the dataset.  Denoting by $Z$ this random
variable, we study the ratio of $\log \prob{Z < r}$ and $\log r$, for
different values of the $r$, and fit a straight line to this; the
slope of the line is the correlation dimension of the dataset.

Interpreting the correlation dimension of discrete data turns out to
be quite difficult too, because the values of the correlation
dimension tend to very small.  To relieve this problem, we normalize
them by considering what would be the number of variables in a dataset
with the same correlation dimension but with independent columns.
This {\em normalized correlation dimension} is our main concept.

We study the behavior of the correlation dimension and the normalized
correlation dimension, both theoretically and empirically.  We give
approximations for correlation dimension, in the case of independent
variables, showing that it decreases when the data becomes more
sparse.  We also give theoretical evidence indicating that positive
correlations between the variables lead to smaller correlation
dimensions.

Our empirical results for generated data show that the normalized
correlation dimension of a dataset with $K$ independent variables is
very close to $K$, irrespective of the sparsity of the attributes.  We
demonstrate that adding positive correlation decreases the dimension.
For real datasets, we show that different datasets have quite
different normalized correlation dimensions, and that the ratio of the
number of variables to the normalized correlation dimension varies a
lot.  This indicates that the amount of structure in the datasets is
highly variable.  We also compare the normalized correlation dimension
against the number of PCA components needed to explain $90\%$ of the
variance in the data, showing interesting differences among the
datasets.

The rest of this paper is organized as follows.  In
Section~\ref{sec:fractal} we define the correlation dimension for
binary datasets. we analyze the correlation dimension in
Section~\ref{sec:properties}.  The correlation dimension produces too
small values and hence in Section~\ref{sec:inverse} we provide means
for scaling the dimension. In Section~\ref{sec:experiments} we
represent our tests with real world datasets.  In
Section~\ref{sec:related} we review the related literature, and
Section~\ref{section:conclusions} is a short conclusion.
\section{Correlation Dimension}
\label{sec:fractal}

There are several possible definitions of the fractal dimension of a
subset of the Euclidean space; see,
e.g.,~\cite{barnsley88fractals,ott97chaos} for a survey; the {\em
R{\'e}nyi dimensions}~\cite{ott97chaos} form a fairly general family.
The standard definitions of the fractal dimension are not directly
applicable in the discrete case, but they can be modified to fit in.

The basic idea in the fractal dimensions is to study the distance
between two random data points. 

We focus on the correlation dimension.  Consider a 0/1 dataset $D$
with $K$ variables.  Denote by $Z_D$ the random variable whose value
is the $L_1$ distance between two randomly chosen points from $D$;
thus $0 \leq Z_D \leq K$.  Informally, the correlation dimension is
the slope of the line fitted in the log-log plot of $(r,\prob{Z_D <
r})$.

The more formal definition is more complex because the non-continuity
of $\prob{Z_D < r}$ causes misbehavior in our later definitions. To
remedy these problems we first define function
$\funcdef{f}{\mathbb{N}}{\mathbb{R}}$ to be $f\pr{r} = \prob{Z_D <
r}$. We extend this function to real numbers by linear
interpolation. Thus $f(r)$ is a continuous function being equal to
$\prob{Z_D < r}$ when $r$ is an integer.

Let $0 \leq r_1 < r_2 \leq K$.  Then the different radii $r$ and the
function $f$ for a given dataset $D$ determine the point set
\[
\begin{split}
\mathcal{I}\pr{D, r_1, r_2, N} = & \left\{\pr{\log r, \log f(r)} \mid \right.\\
& \quad \left. r = r_1 + \frac{i\pr{r_2 - r_1}}{N}, i = 0 \ldots N \right\}.
\end{split}
\]
We usually omit the parameter $N$ for the sake of brevity.

For example, assume that $\prob{Z_D \leq r} \propto r^d$ for some $d$,
that is, the number of pairs of points within distance $d$ grows as
$r^d$.  Then $\mathcal{I}(D, r_1, r_2)$ is a straight line and the
correlation dimension is equal to $d$.

\vspace{0.2cm}
\begin{definition} \label{def:cd}
The \emph{correlation dimension} $\cdR{D}{r_1, r_2}$ for a binary
dataset $D$ and radii $r_1$ and $r_2$ is the slope of the
least-squares linear approximation $\mathcal{I}\pr{Z, r_1, r_2}$.

Assume that we are given $\alpha_1$ and $\alpha_2$ such that $0 \leq
\alpha_1 < \alpha_2 \leq 1$.  We define $\cdA{D}{\alpha_1, \alpha_2}$
to be $\cdR{D}{r_1, r_2}$, where the radii $r_i$ are set to be
$\max\pr{f^{-1}\pr{\alpha_i}, 1}$. The reason for truncating $r_i$ is
to avoid some misbehavior occurring with extremely sparse datasets.
\end{definition}
\vspace{0.2cm}

That is, $\mathcal{I}\pr{D, r_1, r_2}$ is the set of points containing
the logarithm of the radius $r$ and the logarithm of the fraction of
pairs of points from $D$ that have $L_1$ distance less than or equal
to $r$.  The correlation dimension is the slope of the line that fits
these points best. The difference
between $\cdR{D}{r_1, r_2}$ and $\cdA{D}{\alpha_1, \alpha_2}$ is that
$\cd_R$ is defined by using the absolute bounds $r_1$ and $r_2$ for
the radius $r$, whereas $\cd_A$ uses the parameters $\alpha_1$ and
$\alpha_2$ to specify the sizes of the tail of the distribution.  For
instance, $\cdA{D}{1/4,3/4}$ is the correlation dimension obtained by
first computing the values $r_1$ and $r_2$ such that one quarter of
the pairs of points have distance below $r_1$, and one quarter of the
pairs have distance above $r_2$.  The dimension is then obtained by
computing $N + 1$ points $\pr{\log r, \log f(r)}$ with $r_1 \leq r
\leq r_2$, and by fitting a line to these points, in the least-squares
sense.

How can we compute the correlation dimension of a binary dataset $D$?
The probability $\prob{Z_D < r}$ can be computed
\[
\frac{1}{\abs{D}^2}\sum_{x \in D}\sum_{y \in D} I(\abs{x-y} < r),
\]
where $I(\abs{x-y} < r)$ is the indicator function having value $1$ if
$\abs{x-y} < r$, and value $0$ otherwise.  Computing the values
$\prob{Z_D < r}$ for all $r$ can thus be done trivially in time $O(N^2
K)$, where $N$ is the number of points in $D$ and $K$ is the number of
variables.  A sparse matrix representation yields to a running time of
$O(NM)$, where $M$ is the total number of 1's in the data: If point $i$
has $m_i$ 1's, then $\sum_i m_i = M$, and computing the all pairwise
distances takes time
\[
\sum_{i=1}^N\sum_{j=1}^N (m_i + m_j) = 2NM.
\]

If the number of points in a dataset is so large that quadratic
computation time in the number of points is too slow, we can take a
random subset $D_s$ from $D$ and estimate the probability $\prob{Z <
r}$ by
\[
\frac{1}{\abs{D}\abs{D_s}}\sum_{x \in D}\sum_{y \in D_s} I(\abs{x-y} < r)
\]
or by
\[
\frac{1}{\abs{D_s}^2}\sum_{x \in D_s}\sum_{y \in D_s} I(\abs{x-y} < r).
\]

\section{Properties of binary correlation dimension}
\label{sec:properties}
In this section we analyze the properties of the correlation dimension
$\cdR{D}{r_1, r_2}$ for binary datasets.  We show the following
results under some simplifying assumptions.  First, we prove that if
the original data has independent columns, then the correlation
dimension grows as the probabilities of the individual variables get
closer to $0.5$.  Second, we show that in the independent case
$\cdA{D}{\alpha,1-\alpha}$ grows as $\sqrt{K}$, where $K$ is the
number of attributes (columns) in the dataset.  Third, we prove that
if the variables are not independent, then the correlation dimension
is smaller than for a dataset with the same margins but independent
variables.

The analysis is not easy, and we need to make some simplifying
assumptions.  One complication is caused by the fact that the
definition of $\cdR{D}{r_1,r_2}$ involves computing the slope of a set
of points. However, note that $\mathcal{I}\pr{D, r_1, r_2, 1}$
contains only two points, and hence we have
\[
\cdR{D}{r_1,r_2, 1} = \frac{\log f(r_2) - \log f(r_1)} 
                         {\log r_2 - \log r_1}.
\]

Similarly, in the case of $\cdA{D}{\alpha_1,\alpha_2, 1}$ we have
$r_1$ and $r_2$ such that $\alpha_i = f(r_i)$, and hence
\[
\cdA{D}{r_1,r_2, 1} = \frac{\log \alpha_2 - \log \alpha_1} 
                         {\log r_2 - \log r_1}.
\]
Throughout this section we will assume that the parameter $N$ in
$\mathcal{I}\pr{D, r_1, r_2, N}$ is equal to $1$.

\vspace{0.2cm}
\begin{proposition}
\label{proposition:independent}
Assume that the dataset $D$ has $K$ independent variables, and that
the probability of the variable $i$ being 1 is $p_i$ for each $i$, and
let $q_i = 2 p_i (1-p_i)$.  Assuming that $K$ is large enough, we have
\[
\cdA{D}{\alpha,1-\alpha} \approx C(\alpha) \frac{\sum_i q_i}{\sqrt{\sum_i q_i (1-q_i)}},
\]
where $C(\alpha)$ is a constant depending only on $\alpha$.  In
particular, if all probabilities $p_i$ are equal to $p$, then for $q =
2 p (1-p)$ we have
\[
\cdA{D}{\alpha, 1-\alpha} = C(\alpha) \sqrt{\frac{Kq}{1-q}}.
\]
\end{proposition}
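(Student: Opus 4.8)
The plan is to recognize that for two points $x,y$ drawn independently from $D$, the $L_1$ distance decomposes coordinatewise as $Z_D = \sum_{i=1}^K B_i$, where $B_i$ is the indicator of the event $x_i \neq y_i$. Since the columns are independent, the $B_i$ are independent Bernoulli variables with $\prob{B_i = 1} = p_i(1-p_i) + (1-p_i)p_i = q_i$. Hence $\mean{Z_D} = \sum_i q_i =: \mu$ and $\var{Z_D} = \sum_i q_i(1-q_i) =: \sigma^2$, so $Z_D$ is a sum of independent (non-identical) Bernoullis. For $K$ large a Lindeberg/Lyapunov central limit theorem then gives that $Z_D$ is approximately normal, $Z_D \approx \mathcal{N}\pr{\mu,\sigma^2}$.

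Next I would use the single-segment formula stated at the start of this section. With $N=1$,
\[
\cdA{D}{\alpha, 1-\alpha} = \frac{\log\pr{1-\alpha} - \log\alpha}{\log r_2 - \log r_1},
\]
where $r_1,r_2$ are the $\alpha$- and $\pr{1-\alpha}$-quantiles of $f$, i.e. $f(r_1)=\alpha$ and $f(r_2)=1-\alpha$. Invoking the normal approximation, these quantiles become $r_1 \approx \mu + \sigma\,\Phi^{-1}(\alpha)$ and $r_2 \approx \mu + \sigma\,\Phi^{-1}(1-\alpha)$. Writing $z_\alpha = \Phi^{-1}(1-\alpha) = -\Phi^{-1}(\alpha)$ by symmetry gives $r_1 \approx \mu - \sigma z_\alpha$ and $r_2 \approx \mu + \sigma z_\alpha$; for $K$ large these are of order $\mu$, so the truncation $\max\pr{f^{-1}(\alpha_i),1}$ in the definition is inactive. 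Thus
\[
\log r_2 - \log r_1 = \log\frac{1 + \pr{\sigma/\mu} z_\alpha}{1 - \pr{\sigma/\mu} z_\alpha}.
\]

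The crux is the behaviour of $\sigma/\mu$. Since $\mu = \sum_i q_i$ grows linearly in the effective number of nondegenerate columns while $\sigma = \sqrt{\sum_i q_i(1-q_i)}$ grows like its square root, the ratio $\sigma/\mu$ tends to $0$ as $K\to\infty$; this is precisely the role of the hypothesis that $K$ is large enough. I would then Taylor expand $\log\frac{1+t}{1-t} = 2t + O(t^3)$ with $t = \pr{\sigma/\mu}z_\alpha$, obtaining $\log r_2 - \log r_1 \approx 2 z_\alpha\,\sigma/\mu$. Substituting back,
\[
\cdA{D}{\alpha, 1-\alpha} \approx \frac{\log\frac{1-\alpha}{\alpha}}{2\,\Phi^{-1}(1-\alpha)}\cdot\frac{\mu}{\sigma} = C(\alpha)\,\frac{\sum_i q_i}{\sqrt{\sum_i q_i(1-q_i)}},
\]
identifying $C(\alpha) = \log\tfrac{1-\alpha}{\alpha}\big/\bigl(2\,\Phi^{-1}(1-\alpha)\bigr)$, which depends only on $\alpha$. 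The equal-probability case is immediate: with $q_i = q$ for all $i$ we get $\mu/\sigma = Kq/\sqrt{Kq(1-q)} = \sqrt{Kq/(1-q)}$, yielding the stated closed form.

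The main obstacle I anticipate is turning the approximations from heuristic into rigorous: the central limit theorem controls the CDF of $Z_D$, but the definition uses its \emph{inverse}, so one must transfer convergence of $f$ to convergence of the quantiles $r_1,r_2$, which requires the limiting normal density to be bounded away from $0$ near those quantiles. One must also check a Lyapunov-type condition so that the Bernoulli sum is genuinely asymptotically normal when the $q_i$ differ, and account for the discreteness of $Z_D$ versus its linear interpolation $f$. These are exactly the points where each ``$\approx$'' must be justified and where the precise meaning of ``$K$ large enough'' is pinned down.
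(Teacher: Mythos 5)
Your proof follows essentially the same route as the paper's: decompose $Z_D$ into independent coordinatewise disagreement indicators with means $q_i = 2p_i(1-p_i)$, apply the normal approximation with $\mu = \sum_i q_i$ and $\sigma^2 = \sum_i q_i(1-q_i)$, locate the symmetric quantiles $r_{1,2} = \mu \mp c\sigma$, and expand $\log\frac{1+t}{1-t} \approx 2t$ with $t = c\sigma/\mu$ to get $C(\alpha) = \log\frac{1-\alpha}{\alpha}\big/\bigl(2\Phi^{-1}(1-\alpha)\bigr)$. Your treatment is correct and in fact slightly cleaner on the sign of the quantile and on flagging where the approximations would need rigorous justification.
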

\vspace{0.2cm}

The proposition indicates that the correlation dimension is maximized
for variables as close to $0.5$ as possible.

\vspace{0.2cm}
\begin{corollary}
Assume the dataset $D$ has independent columns. The correlation
dimension $\cdA{D}{\alpha,1-\alpha}$ is maximized if the variables
have frequency $0.5$.
\end{corollary}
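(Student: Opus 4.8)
The plan is to take the closed-form approximation from Proposition~\ref{proposition:independent} as the object to be optimized, since the corollary is a claim about that expression. The constant $C(\alpha)$ is positive and does not depend on the frequencies, so maximizing $\cdA{D}{\alpha,1-\alpha}$ over all frequency vectors $\pr{p_1,\ldots,p_K}$ (with $K$ held fixed) is equivalent to maximizing
\[
g\pr{q_1,\ldots,q_K} = \frac{\sum_i q_i}{\sqrt{\sum_i q_i\pr{1-q_i}}}, \qquad q_i = 2p_i\pr{1-p_i}.
\]
As $p_i$ sweeps $[0,1]$, the image $q_i = 2p_i\pr{1-p_i}$ sweeps $[0,1/2]$ and attains its maximum value $1/2$ exactly at $p_i = 1/2$. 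Hence it suffices to show that $g$ is maximized on the box $[0,1/2]^K$ at the corner where every $q_i = 1/2$, and then to pull this back to the $p_i$.

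The heart of the argument is to establish that $g$ is nondecreasing in each coordinate on this box. Writing $h = \sum_i q_i$ and $m = \sum_i q_i\pr{1-q_i}$, differentiating $g = h\,m^{-1/2}$ gives
\[
\frac{\partial g}{\partial q_j} = m^{-3/2}\pr{m - \tfrac{1}{2}h + h q_j},
\]
so the sign is controlled by the bracketed term. The key step — and the one I expect to be the main obstacle — is the observation that the constraint $q_i \le 1/2$ yields $1 - q_i \ge 1/2$, hence $q_i\pr{1-q_i} \ge q_i/2$ for each $i$; summing gives $m \ge h/2$. Therefore $m - \tfrac12 h \ge 0$, and since $h q_j \ge 0$ as well, we conclude $\partial g / \partial q_j \ge 0$ on the whole box (and strictly positive in its interior, where $m > 0$ and $q_j > 0$).

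Monotonicity in each coordinate then forces the maximizer to the upper corner $q_1 = \cdots = q_K = 1/2$, which by the correspondence above is realized precisely when $p_1 = \cdots = p_K = 1/2$. This proves the claim. As a sanity check in the symmetric case $p_i \equiv p$, the formula of Proposition~\ref{proposition:independent} collapses to $C(\alpha)\sqrt{Kq/\pr{1-q}}$ with $q = 2p\pr{1-p}$; since $q/\pr{1-q}$ is increasing on $[0,1)$ and $q$ is largest at $p = 1/2$, the same conclusion follows in one line.
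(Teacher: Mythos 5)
Your proof is correct and follows the same route the paper intends: the corollary is stated as an immediate consequence of Proposition~\ref{proposition:independent}'s approximation, and you simply make explicit the coordinatewise monotonicity of $\sum_i q_i / \sqrt{\sum_i q_i(1-q_i)}$ on the box $[0,1/2]^K$ that the paper leaves unstated. Your derivative computation and the bound $\sum_i q_i(1-q_i) \geq \tfrac{1}{2}\sum_i q_i$ (valid since each $q_i \leq 1/2$) are both correct, so the argument supplies exactly the detail the paper omits.
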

\vspace{0.2cm}

The proposition also tells that for a dataset with independent
identically distributed columns, the dimension grows as a square root
of the number of columns.

\begin{proof}[Proof of Proposition~\ref{proposition:independent}]
Recall that 
\[
\cdA{D}{\alpha,1-\alpha} = \frac{\log (1 - \alpha) - \log \alpha} 
                         {\log r_2 - \log r_1},
\]
where $r_1$ and $r_2$ are such that $\alpha = f(r_1)$ and
$1-\alpha = f(r_2)$.  The numerator is $\log((1-\alpha)/\alpha)$.
Assume that $K$ is large enough that we can estimate $f(r)$ by
$\prob{Z_D < r}$.

We next study the denominator $\log r_2 - \log r_1$.  We have to
analyze the distribution of the random variable $Z_D$, the $L_1$
distance between two randomly chosen points from $D$.  For simplicity,
we denote $Z_D$ by $Z$ in the sequel.  Let $Z_i$ be the indicator
variable having value 1 if two randomly chosen elements from $D$
disagree in variable $i$; then $Z = \sum_{i=1}^K Z_i$.

Denote by $q_i = \mean{Z_i}$ the probability that two randomly chosen
points from $D$ differ in coordinate $i$.  If $p_i$ is the probability
that variable $i$ in $D$ has value $1$, then $q_i = 2p_i(1 - p_i)$,
and it is easy to see that $q_i \leq 1/2$.

As $Z = \sum_{i=1}^K Z_i$, the variable $Z$ has a binomial
distribution.  For simplicity we use the normal approximation: $Z$ is
distributed as $N(\mu,\sigma)$, where $\mu = \sum_i q_i$ and $\sigma^2
= \sum_i q_i (1-q_i)$.  If $K$ is large enough, this approximation is
accurate.

By the symmetry of the normal distribution there is a constant $c$
such that $r_1 = \mu - c \sigma$ and $r_2 = \mu + c \sigma$.
Actually, $c$ is the inverse of the cumulative distribution function
of the normal distribution with parameters 0 and 1, i.e., $c =
\Phi^{-1}(\alpha) = \sqrt{2}\: \mathrm{erf}^{-1}(2 \alpha -1)$ The
denominator is
\[
   \log r_2 - \log r_1 = \log \frac{\mu + c \sigma}{\mu - c \sigma} 
   = \log\sum_{n = 0}^{\infty} \pr{\frac{2 c \sigma}{\mu}}^n.
\]
Dropping all but the two first terms and using the series for
logarithm we obtain that the numerate is
\[ 
   \log r_2 - \log r_1 \approx \frac{2 c \sigma}{\mu}.
\]
By setting 
\[
C(\alpha) =  \frac{\log ((1-\alpha)/\alpha)}{2c} = \frac{\log ((1-\alpha)/\alpha)}{2\sqrt{2}\: \mathrm{erf}^{-1}(2 \alpha -1)}
\]
we have the desired result.
\end{proof}

If $\alpha = 1/4$, then the constant $C(\alpha)$ in Proposition~\ref{proposition:independent}
is about $0.815$.

The correlation dimension has an interesting connection to the average
distance in randomly picked point pairs.

\vspace{0.2cm}
\begin{proposition}
\label{proposition:chernoff}
Assume that the dataset $D$ has $K$ independent variables, and that
the probability of variable $i$ being 1 is $p_i$.  Let $q_i = \sum_i 2
p_i (1-p_i)$. Let $\mu = \sum_i q_i$ be the average distance of two
randomly picked points.

Assume that we are given two constants $c_1$ and $c_2$ such that $0
\leq c_1 < c_2 \leq 1$. Then we can approximate the correlation
dimension as
\[
\cdR{D}{c_1\mu, c_2\mu} \approx C(c_1, c_2)\mu,
\]
where $C(c_1, c_2)$ depends only of $c_1$ and $c_2$.
\end{proposition}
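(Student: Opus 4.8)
The plan is to follow the same strategy as in the proof of Proposition~\ref{proposition:independent}, but now holding the radii at fixed fractions of the mean distance $\mu$ rather than at fixed probability quantiles. Since we assume $N=1$, the correlation dimension is simply
\[
\cdR{D}{c_1\mu, c_2\mu} = \frac{\log f(c_2\mu) - \log f(c_1\mu)}{\log(c_2\mu) - \log(c_1\mu)}.
\]
The denominator collapses immediately: $\log(c_2\mu) - \log(c_1\mu) = \log(c_2/c_1)$, a constant independent of $\mu$. So the entire $\mu$-dependence must come from the numerator, and the heart of the proof is estimating $\log f(c_i\mu)$ for $i=1,2$.

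First I would reuse the setup from the previous proof: write $Z = \sum_{i=1}^K Z_i$ with $\mean{Z} = \mu = \sum_i q_i$ and $\sigma^2 = \sum_i q_i(1-q_i)$, and approximate $Z$ by a normal $N(\mu,\sigma)$ for large $K$. Then $f(c_i\mu) = \prob{Z < c_i\mu} \approx \Phi\!\pr{(c_i\mu - \mu)/\sigma} = \Phi\!\pr{(c_i-1)\mu/\sigma}$. The key observation is that $\mu/\sigma = \sum_i q_i / \sqrt{\sum_i q_i(1-q_i)}$ grows like $\sqrt{K}$ (exactly as in Proposition~\ref{proposition:independent}), so the argument of $\Phi$ is a large negative multiple of $\mu/\sigma$ whenever $c_i<1$. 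I would therefore invoke the Gaussian tail asymptotic $\log \Phi(-t) \approx -t^2/2$ for large $t$, which gives
\[
\log f(c_i\mu) \approx -\frac{1}{2}\pr{\frac{(1-c_i)\mu}{\sigma}}^2 = -\frac{(1-c_i)^2}{2}\cdot\frac{\mu^2}{\sigma^2}.
\]

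Substituting this into the numerator yields
\[
\log f(c_2\mu) - \log f(c_1\mu) \approx \frac{(1-c_1)^2 - (1-c_2)^2}{2}\cdot\frac{\mu^2}{\sigma^2},
\]
and dividing by $\log(c_2/c_1)$ produces a correlation dimension proportional to $\mu^2/\sigma^2$. The remaining step is to argue that $\mu^2/\sigma^2$ is itself proportional to $\mu$, which follows if $\sigma^2$ scales like $\mu$ — a reasonable reduction since each $q_i \le 1/2$ forces $q_i(1-q_i)$ to be comparable to $q_i$ (indeed $q_i(1-q_i) \ge q_i/2$), so $\sigma^2 = \Theta(\mu)$. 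Folding the proportionality constants, including $\tfrac12\big((1-c_1)^2-(1-c_2)^2\big)/\log(c_2/c_1)$ and the $\mu/\sigma^2$ factor, into a single constant $C(c_1,c_2)$ gives the claimed form $\cdR{D}{c_1\mu,c_2\mu} \approx C(c_1,c_2)\mu$.

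The main obstacle I anticipate is the step where $\mu^2/\sigma^2$ must be converted into a clean multiple of $\mu$: this requires $\sigma^2$ to be genuinely proportional to $\mu$, which holds exactly only when all the $q_i$ are equal and otherwise holds only up to the spread of the $q_i$ values. The honest statement is an approximation that absorbs this imprecision, and I expect the proof to lean on the same "$K$ large enough" hand-waving used earlier, together with the assumption (implicit in identically-distributed-column settings) that the $q_i$ are roughly uniform so that $\sigma^2 \approx \mu(1-\bar q)$ for a representative $\bar q$. The secondary subtlety is justifying the Gaussian tail approximation $\log\Phi(-t)\approx -t^2/2$, which drops lower-order logarithmic terms in $t$; these are negligible relative to the dominant $t^2$ term precisely because $t = (1-c_i)\mu/\sigma$ is large, again by the $\sqrt{K}$ growth established in the independent case.
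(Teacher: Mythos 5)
First, a point of reference: the paper states this proposition \emph{without} proof, so I am comparing your argument against the approach the paper evidently intends, signalled by the label ``chernoff'' and by the remark following the proposition that here ``the dimension grows as a function of the mean $\mu$, not as a function of $\mu/\sigma$.'' The intended tool is the multiplicative Chernoff bound for a sum of independent indicators, $\prob{Z \leq (1-\delta)\mu} \leq e^{-\mu\delta^2/2}$, whose exponent depends \emph{only} on $\mu$ and $\delta$, not on the individual $q_i$. Taking it as an approximate equality with $\delta = 1-c_i$ gives $\log f(c_i\mu) \approx -(1-c_i)^2\mu/2$, so the numerator of the slope is a constant times $\mu$ outright, the denominator is $\log(c_2/c_1)$, and $C(c_1,c_2) = \bigl((1-c_1)^2-(1-c_2)^2\bigr)/\bigl(2\log(c_2/c_1)\bigr)$ genuinely depends only on $c_1,c_2$, as the statement requires.

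Your route through the normal approximation has two soft spots that the Chernoff route avoids. First, $\prob{Z < c\mu}$ for fixed $c<1$ is a large-deviation event: the threshold sits $(1-c)\mu/\sigma = \Theta(\sqrt{K})$ standard deviations below the mean, exactly the regime where the central limit theorem gives no control over the \emph{relative} error of a tail probability, so invoking $\log \Phi(-t)\approx -t^2/2$ there is not justified. The true exponential rate is the Cram\'er rate function, which in the sparse (Poisson-like) regime is $\mu\,(c\log c - c + 1)$ rather than your Gaussian $(1-c)^2\mu^2/(2\sigma^2)$; the two agree to second order near $c=1$ but differ by a factor approaching $2$ as $c\to 0$. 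Second, your derivation lands on $\mu^2/\sigma^2$ and you must then absorb $\mu/\sigma^2 = \sum_i q_i / \sum_i q_i(1-q_i)$ into $C(c_1,c_2)$; but that ratio depends on the data (it ranges over $[1,2]$ since $q_i\le 1/2$), so the resulting ``constant'' is not a function of $c_1,c_2$ alone, contradicting the statement unless one additionally assumes sparsity ($q_i\approx 0$), where the ratio tends to $1$. You flag both issues honestly, and in the sparse limit your formula coincides with the Chernoff one, but as written your argument establishes a weaker claim ($\cd_R \propto \mu^2/\sigma^2$ with a data-dependent proportionality) than the one stated, whereas the Chernoff bound delivers the $\mu$-dependence and the purely $(c_1,c_2)$-dependent constant natively.
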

\vspace{0.2cm}

Note that Proposition~\ref{proposition:chernoff} gives an
approximation for the quantity $\cd_R$, while
Proposition~\ref{proposition:independent} is about $\cd_A$; this,
however, is a superficial difference.  More important is the fact that
in Proposition~\ref{proposition:chernoff} we look at the case where
the bounds $r_1$ and $r_2$ are on the same side of the mean, whereas
the bounds corresponding to $\alpha$ and $1-\alpha$ from
Proposition~\ref{proposition:independent} are on the two sides of the
mean.  This implies that Proposition~\ref{proposition:chernoff} gives
a stronger bound: the dimension grows as a function of the mean $\mu$,
not as a function of $\mu/\sigma$.

\vspace{0.2cm}
\begin{example}
\label{exm:copydim}
Let $D$ be a dataset with $K$ dimensions, and consider the set $D'$
obtained by copying each variable in $D$ to $N$ new variables.  Then
$$\prob{Z_D < r} = \prob{Z_{D'} < Nr},$$ and hence 
$$\cdR{D}{r_1,
r_2} = \cdR{D'}{Nr_1, Nr_2}.$$
\end{example}
\vspace{0.2cm}

Given a dataset $D$ with $K$ columns, we denote by $\ind{D}$ a random
binary variable having $K$ independent components such that the
probability of $i$th component being $1$ is equal to the probability
of $i$th column of $D$ being $1$. Alternatively, $\ind{D}$ can be
considered as a dataset obtained by permuting each column of $D$
independently. We conjecture that the correlation dimension of $D$ is
always smaller than the correlation dimension of $\ind{D}$, given that
the original variables are all positively correlated.

\vspace{0.2cm}
\begin{conjecture}
\label{conjecture:positive}
Assume the marginal probability of all original variables are less
than $0.5$, and that all pairs of original variables are positively
correlated.  Then
\[
   \cdA{D}{\alpha,1-\alpha} \leq \cdA{\ind{D}}{\alpha,1-\alpha},
\]
i.e., the correlation dimension of the original data is not larger
than the correlation dimension of the data with each column permuted
randomly.
\end{conjecture}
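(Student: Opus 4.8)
The plan is to reduce the comparison of the two correlation dimensions to a comparison of the standard deviations of the pairwise-distance variable $Z$. Following the proof of Proposition~\ref{proposition:independent}, the only features of the distribution of $Z_D$ that enter the approximation $\cdA{D}{\alpha,1-\alpha}\approx C(\alpha)\,\mu/\sigma$ are its mean $\mu=\mean{Z_D}$ and its standard deviation $\sigma$: the derivation uses only that $Z_D$ is approximately normal, so that $r_1=\mu-c\sigma$ and $r_2=\mu+c\sigma$, and nothing else. This same reasoning applies to the correlated dataset $D$ itself, provided $Z_D$ remains approximately normal. First I would observe that the marginals of $D$ and $\ind{D}$ coincide by construction, so $\mu=\sum_i q_i$ is identical for both. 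Hence the inequality to be proved becomes equivalent to $\sigma_D\geq\sigma_{\ind{D}}$, i.e.\ that positive correlation among the columns inflates the variance of the pairwise distance.

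Next I would expand this variance. Writing $Z_D=\sum_i Z_i$, with $Z_i$ the indicator that two independently sampled points from $D$ disagree in coordinate $i$, we have $\var{Z_D}=\sum_i\var{Z_i}+\sum_{i\neq j}\cov{Z_i,Z_j}$. The single-coordinate terms $\var{Z_i}=q_i(1-q_i)$ depend only on the marginal $p_i$, so they agree for $D$ and $\ind{D}$, and for $\ind{D}$ all cross terms vanish. Thus the comparison reduces entirely to the sign of the cross-covariances $\cov{Z_i,Z_j}$ for the correlated data. The key computational step is to express each of these in terms of the marginals $p_i,p_j$ and the column correlation $\rho_{ij}=\cov{X_i,X_j}$. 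Parametrizing the joint cell probabilities of columns $i$ and $j$ by $p_i,p_j,\rho_{ij}$ and computing $\prob{X_i\neq Y_i,\,X_j\neq Y_j}=2(ad+bc)$ for two independent draws $X,Y$ from $D$, I expect to obtain the clean identity
\[
\cov{Z_i,Z_j}=2\rho_{ij}\spr{(1-2p_i)(1-2p_j)+2\rho_{ij}}.
\]

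With this identity the conclusion is immediate. Under the hypotheses, every $p_i<1/2$ gives $(1-2p_i)(1-2p_j)>0$, and positive pairwise correlation gives $\rho_{ij}\geq 0$; hence the bracketed factor is nonnegative and $\cov{Z_i,Z_j}\geq 0$ for every pair. Summing, $\sum_{i\neq j}\cov{Z_i,Z_j}\geq 0$, so $\sigma_D^2\geq\sigma_{\ind{D}}^2$ and therefore $\mu/\sigma_D\leq\mu/\sigma_{\ind{D}}$, which through the approximation yields $\cdA{D}{\alpha,1-\alpha}\leq\cdA{\ind{D}}{\alpha,1-\alpha}$. Note the identity also explains why the marginal assumption $p_i<1/2$ cannot simply be dropped: if some $p_i>1/2$, the factor $(1-2p_i)(1-2p_j)$ can be negative and a positively correlated pair may produce a negative covariance.

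The main obstacle is that this argument is only as rigorous as the normal approximation on which it rests. For $\ind{D}$ the central limit theorem is clean, but for the correlated $D$ the sum $\sum_i Z_i$ is a sum of dependent indicators, so one needs a central limit theorem under weak dependence (or some mixing / bounded-correlation condition on the columns) to justify that $Z_D$ is still approximately normal with the stated mean and variance; one must also control the error in the two-term logarithm expansion when $\sigma/\mu$ is not vanishingly small. Making these steps precise is exactly what separates a clean theorem from the statement as given, which is presumably why it is left as a conjecture. A fully rigorous alternative worth attempting would bypass normality entirely by comparing the distribution functions of $Z_D$ and $Z_{\ind{D}}$ directly through a stochastic-ordering or convexity argument on the log-log slope, but the variance identity above already isolates the essential mechanism driving the conjecture.
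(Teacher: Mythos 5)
You should first be clear that the paper does not prove this statement: it is explicitly a conjecture, and the only support the authors offer is a one-sentence remark comparing $\var{Z_D}$ with $\var{Z_{\ind{D}}}$, followed by the caveat that ``this does not, however, suffice for the proof.'' Your argument is essentially a worked-out version of that remark. The covariance identity $\cov{Z_i, Z_j} = 2\rho_{ij}\spr{(1-2p_i)(1-2p_j) + 2\rho_{ij}}$ checks out (expanding $2(ad+bc)-q_iq_j$ with $a=p_ip_j+\rho_{ij}$ gives exactly this), and it does show that under the stated hypotheses the means of $Z_D$ and $Z_{\ind{D}}$ agree while $\var{Z_D} \ge \var{Z_{\ind{D}}}$. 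That is the direction which actually supports the conjecture through $\cd_A \approx C(\alpha)\,\mu/\sigma$, and you have it right where the paper's prose appears to have it backwards (the paper says the variance of $Z_D$ is ``no more than'' that of $Z_{\ind{D}}$; the example of $K$ identical columns, where $\var{Z_D}=K^2q(1-q)$ versus $Kq(1-q)$ for the independent version, shows the inequality must run your way). Your closing remark about why $p_i<1/2$ cannot be dropped is also a genuine insight that the paper does not contain.

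The genuine gap is the one you name yourself, and it is not a removable technicality: the entire reduction of $\cdA{D}{\alpha,1-\alpha}$ to the scalar $\mu/\sigma$ rests on $Z_D$ being approximately normal, and Proposition~\ref{proposition:independent} justifies this only for independent columns via the binomial/normal approximation. For dependent columns $Z_D=\sum_i Z_i$ is a sum of dependent indicators whose distribution can be far from normal (for strongly clustered data it is typically multimodal), the quantiles $r_1,r_2$ need not sit symmetrically at $\mu\pm c\sigma$, and even a central limit theorem under weak dependence would only give an approximation with an uncontrolled error term, which cannot deliver the exact inequality the conjecture asserts. So what you have proved is that positive column correlation with marginals below $1/2$ inflates $\var{Z_D}$ --- i.e., a corrected and explicit version of the supporting fact the paper alludes to --- together with an honest identification of why this falls short of a proof. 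That is precisely the state in which the paper leaves the problem; the variance comparison ``does not suffice,'' and your proposal, while sound as far as it goes, does not close that gap.
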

\vspace{0.2cm}

Support for this conjecture is provided by the fact that the variance
$\var{Z_D}$ of the variable $Z_D$ can be shown to be no more than the
variance $\var{Z_{\ind{D}}}$; this does not, however, suffice for the
proof.  The intuition behind the above conjecture is similar to what
one observes in other types of definitions of dimension: if we
randomly permute each column of a dataset, we expect to see the rank
of the matrix to grow, and also explain an increase the number of PCA
components needed to explain, say, $90\%$ of the variance.  In the
experimental section we show the empirical evidence for
Conjecture~\ref{conjecture:positive}.
\section{Normalized correlation dimension}
\label{sec:inverse}

The definition of correlation dimension (Definition~\ref{def:cd}) is
based on the definition of correlation dimension for continuous data.
We have argued that the definition has some simple intuitive
properties: for a dataset with independent variables the dimension is
smaller if the variables are sparse, and the dimension shrinks if we
add structure to the data by making variables positively correlated.

However, the scale of the correlation dimension is not very intuitive:
the dimension of a dataset with $K$ independent variables is not $K$,
although this would be the most natural value.  The correlation
dimension gives much smaller values and hence we need some kind of
normalization.

We showed Section~\ref{sec:properties} that under some conditions
independent variables maximize the correlation dimension.  Informally,
we define the {\em normalized correlation dimension} of a dataset $D$
to be the number of variables that a dataset with independent
variables must have in order to have the same correlation dimension as
$D$ does.

More formally, let $\ind{H, p}$ be a dataset with $H$ independent
variables, each of which is equal to 1 with probability $p$.  From
Proposition 1 we have an explicit formula for $\cdA{\ind{H, p}}{\alpha, 1 - \alpha}$:
setting $q = 2 p (1-p)$ we have
\[
\cdA{\ind{H, p}}{\alpha, 1 - \alpha} \approx C(\alpha) \sqrt{\frac{Hq}{1-q}}.
\]
If the dataset would have the same marginal frequency, say $s$, for each
variable, the normalized correlation dimension of a dataset
$D$ could be defined to be the number $H$, such that
\[
\cdA{D}{\alpha,1-\alpha} \mbox{ and } \cdA{\ind{H, s}}{\alpha, 1 - \alpha}
\]
are as close to each other as possible.

The problem with this way of normalizing the dimension is that it
takes as the point of comparison a dataset where all the variables
have the same marginal frequency.  This is very far from being true in
real data.  Thus we modify the definition slightly.

We first find a value $s$ such that
\[
\cdA{\ind{K, s}}{\alpha, 1 - \alpha} = \cdA{\ind{D}}{\alpha, 1 - \alpha},
\]
i.e., a summary of the marginal frequencies of the columns of $D$: $s$
is the frequency that variables of an independent dataset should have
in order that it has the same correlation dimension as $D$ has when
the columns of $D$ have been randomized.  We define the \emph{normalized correlation dimension},
denoted by $\ncdA{D}{\alpha, 1 - \alpha}$,
to be an integer $H$ such that
\[
\cdA{\ind{H, s}}{\alpha, 1 - \alpha} = \cdA{D}{\alpha, 1 - \alpha}.
\]
Proposition~\ref{proposition:independent} implies the following statement.

\vspace{0.2cm}
\begin{proposition}
\label{proposition:normalized}
Given a dataset $D$ with $K$ columns, the dimension
$\ncdA{D}{\alpha, 1 - \alpha}$ can be approximated by
\[
\ncdA{D}{\alpha, 1 - \alpha} \approx \pr{\frac{\cdA{D}{\alpha, 1 - \alpha}}{\cdA{\ind{D}}{\alpha, 1 - \alpha}}}^2K.
\]
\end{proposition}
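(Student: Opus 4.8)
The plan is to chain the two equations that implicitly define the normalized dimension together with the explicit formula from Proposition~\ref{proposition:independent}, arranging things so that everything except the ratio $H/K$ cancels. First I would write down the two defining relations. The summary frequency $s$ is fixed by
\[
\cdA{\ind{K, s}}{\alpha, 1 - \alpha} = \cdA{\ind{D}}{\alpha, 1 - \alpha},
\]
and the normalized dimension $H = \ncdA{D}{\alpha, 1 - \alpha}$ is fixed by
\[
\cdA{\ind{H, s}}{\alpha, 1 - \alpha} = \cdA{D}{\alpha, 1 - \alpha}.
\]
The crucial observation is that the \emph{same} frequency $s$ occurs in both $\ind{K, s}$ and $\ind{H, s}$; only the number of columns differs between them.

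Next I would apply Proposition~\ref{proposition:independent} to each of these two independent datasets. Writing $q = 2s(1-s)$, the proposition yields
\[
\cdA{\ind{H, s}}{\alpha, 1 - \alpha} \approx C(\alpha)\sqrt{\frac{Hq}{1-q}}, \qquad \cdA{\ind{K, s}}{\alpha, 1 - \alpha} \approx C(\alpha)\sqrt{\frac{Kq}{1-q}}.
\]
Dividing one expression by the other, the common factors $C(\alpha)$ and $q/(1-q)$ cancel, leaving
\[
\frac{\cdA{\ind{H, s}}{\alpha, 1 - \alpha}}{\cdA{\ind{K, s}}{\alpha, 1 - \alpha}} \approx \sqrt{\frac{H}{K}}.
\]
Substituting the two defining relations into the left-hand side replaces the independent datasets by $D$ and $\ind{D}$, giving $\cdA{D}{\alpha, 1 - \alpha}/\cdA{\ind{D}}{\alpha, 1 - \alpha} \approx \sqrt{H/K}$. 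Squaring both sides and solving for $H$ then produces exactly the claimed formula.

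The argument is essentially algebraic, so there is no single deep obstacle; the only point requiring care is the bookkeeping that keeps $s$ identical in numerator and denominator, since that is precisely what makes the $q$-dependent factors vanish and leaves a clean $\sqrt{K}$ scaling. The quality of the final approximation is inherited verbatim from Proposition~\ref{proposition:independent}, and is therefore valid in the same regime, namely whenever $K$ (and hence $H$) is large enough for the normal approximation underlying that proposition to be accurate.
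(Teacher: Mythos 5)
Your proposal is correct and is essentially the argument the paper has in mind: the paper states only that Proposition~\ref{proposition:independent} implies the result, and the intended derivation is exactly your chaining of the two defining relations for $s$ and $H$ through the formula $C(\alpha)\sqrt{Hq/(1-q)}$, with the common frequency $s$ making the $q$-dependent factors cancel. Nothing further is needed.
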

\vspace{0.2cm}

For examples, see the beginning of the next section.

\section{Experimental results}
\label{sec:experiments}

In this section we describe our experimental results.  We first
describe some results on synthetic data, and then discuss real datasets
and compare the normalized correlation dimension against PCA.

Unless otherwise mentioned, the dimension used in our experiments was
$\cdA{D}{\alpha_1, \alpha_2, N}$ such that $\alpha_1 = 1/4$, $\alpha_1 = 3/4$, and $N = 50$.

\subsection{Synthetic datasets}
\label{sec:expgenerated}
In this section we provide empirical evidence to support the analysis
in Sections~\ref{sec:properties}~and~\ref{sec:inverse}.  In the first
experiment we generated $100$ datasets with $K$ independent columns
and random margins $p_i$.  For each dataset, the margins $p_i$ were
randomly picked by first picking $p_{\text{max}}$ uniformly at random
from $\spr{0, 1}$.  Then, the probability $p_i$ was picked uniformly
from $\spr{0, p_{\text{max}}}$; this method results in datasets with
different densities.  The box plot in Figure~\ref{fig:generated_box}
shows that the normalized dimension is very close to $K$, the number
of variables in the data.  This shows that for independent data the
normalized correlation dimension is equal to the number of variables,
and that the sparsity of the data does not influence the results.

\begin{figure}
\centering
\includegraphics[width=5cm]{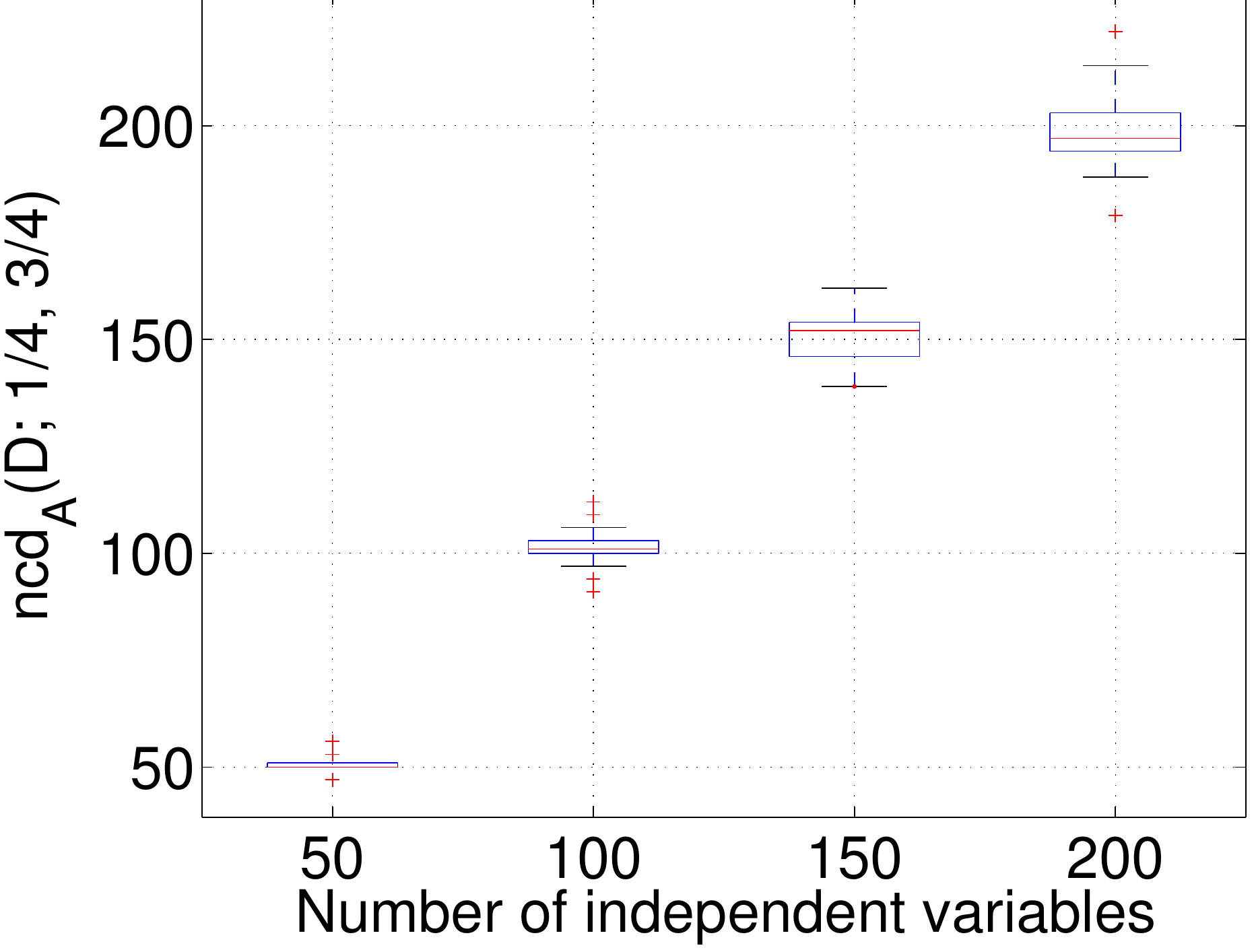}
\caption{Normalized correlation dimension for data having $K$
independent dimensions for $K \in \{50,100,150,200\}$.}
\label{fig:generated_box}
\end{figure}

Next we tested Proposition~\ref{proposition:independent} with
synthetic data.  We generated $100$ datasets having independent
columns and random margins, generated as described above.
Figure~\ref{fig:generated_a} shows the correlation dimension as a
function of $\mu/\sigma$, where $\mu = \mean{Z_{D}}$ and $\sigma^2 =
\var{Z_D}$.  The figure shows the behavior predicted by
Proposition~\ref{proposition:independent}: the normalized fractal
dimension is a linear function of $\mu/\sigma$, and the slope is very
close to $C(1/4)=0.815$.

\begin{figure}[htb]
\centering
\includegraphics[width=5cm]{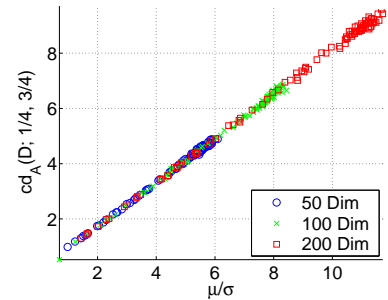}
\caption{Correlation dimension as a function of $\mu/\sigma$ for data
with independent columns (see
Proposition~\ref{proposition:independent}). The $y$-axis is
$\cdA{D}{1/4, 3/4}$ and the $x$-axis is $\mu/\sigma$, where $\mu =
\mean{Z_D}$ and $\sigma^2 = \var{Z_D}$. The slope of the line is about
$C(1/4) = 0.815$.}
\label{fig:generated_a}
\end{figure}

The theoretical section analyzes only the simplest form of the correlation
dimension, that is, the case where $N = 1$. We tested how the dimension
behaves for different $N$. In order to do that, we used generated datasets
from the previous experiments and plotted $\cdA{D}{1/4, 3/4, 50}$
against $\cdA{D}{1/4, 3/4, 1}$. We see from Figure~\ref{fig:generated_cd_vs_cd}
that the correlation dimension has little dependency of $N$.

\begin{figure}[htb]
\centering
\includegraphics[width=5cm]{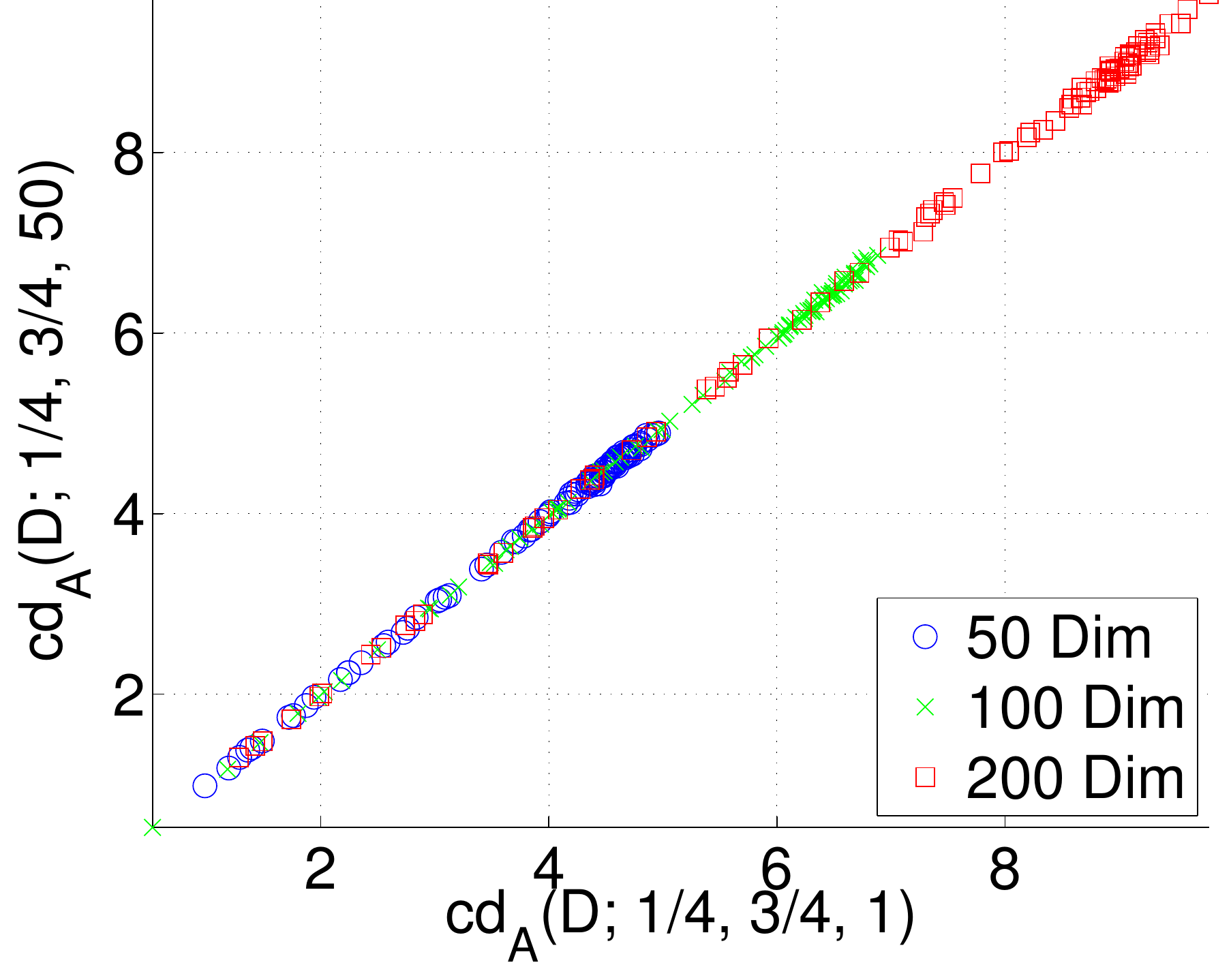}
\caption{Correlation dimension $\cdA{D}{1/4, 3/4, 50}$ as a function of
$\cdA{D}{1/4, 3/4, 1}$.}
\label{fig:generated_cd_vs_cd}
\end{figure}

Next we verified the quality of the approximation of
Proposition~\ref{proposition:chernoff}. We used the same data from the
previous experiment. Figure~\ref{fig:generated_b} shows the
correlation dimension against $\mu = \mean{Z_D}$, the average distance
of two random points. From the figure we see that
Proposition~\ref{proposition:chernoff} is partly supported: the
correlation dimension behaves as a linear function of $\mu$. However,
the slope becomes more gentle as the number of columns increases.

\begin{figure}[htb]
\centering
\includegraphics[width=5cm]{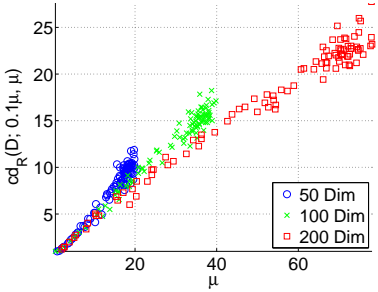}
\caption{Correlation dimension as a function of $\mu$ for data with
independent columns (see Proposition~\ref{proposition:chernoff}). The
$y$-axis is $\cdA{D}{1/4, 3/4}$ and the $x$-axis is $\mu =
\mean{Z_D}$, the average distance between two random points.}
\label{fig:generated_b}
\end{figure}

Our fifth experiment tested how positive correlation affects the
correlation dimension.  Conjecture~\ref{conjecture:positive} predicts
that positive correlation should decrease the correlation
dimension. We tested this conjecture by creating random datasets $D$
such that column $i$ depends on column $i-1$.  Let $X_i$ be variable
number $i$ in the generated dataset.  We generated data by a Markov
process between the variables:
\[
\prob{X_i = 1 \mid X_{i-1} = 0} = \prob{X_i = 0 \mid X_{i-1} = 1} = t_i
\]
and
\[
\prob{X_1 = 1} = \prob{X_1 = 0} = 0.5,
\]
where $X = \spr{X_1, \ldots, X_k}$ is the random element of $D$.

The reversal probabilities $t_i$ were randomly picked as follows: For
each dataset we picked uniformly a random number $t_{\text{max}}$ from
the interval $\spr{0, 1}$. We picked $t_i$ uniformly from the interval
$\spr{0, t_{\text{max}}}$.  Note that if the reversal probabilities
were $0.5$, then the dataset would have independent columns.  Denoting
$Z = Z_D$, we have
\[
\begin{split}
\prob{Z_i = 1 \mid Z_{i-1} = 0} & = \prob{Z_i = 0 \mid Z_{i-1} = 1} \\
& = 2t_i\pr{1-t_i}.
\end{split}
\]

A rough measure of the amount of correlation in the data is $t = \sum
2t_i\pr{1-t_i}$. Figure~\ref{fig:generated_c} shows the correlation
dimension as a function of the quantity $t$. We see that the datasets
with strong correlations tend to have small dimensions, as the theory
predicts.

\begin{figure}[htb]
\centering
\includegraphics[width=5cm]{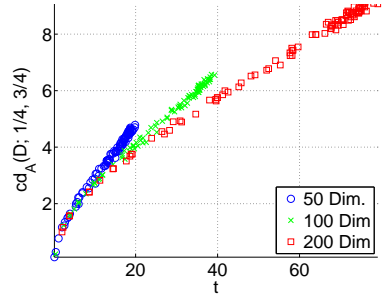}
\caption{Correlation dimension as a function of $t$, a rough measure
of correlation in a dataset. The $y$-axis is $\cdA{D}{1/4, 3/4}$ and
the $x$-axis is the quantity $t = \sum 2t_i\pr{1-t_i}$, where $t_i$ is
the reversal probability between columns $i$ and $i-1$.}
\label{fig:generated_c}
\end{figure}

Next, we go back to the first experiment to see whether the
normalized correlation dimension depends on the sparsity of data.
Note that sparse datasets have small $\mu = \mean{Z_D}$.
Figure~\ref{fig:generated_d} shows the normalized correlation
dimension as a function of $\mu$ for the datasets used in
Figure~\ref{fig:generated_box}.  We see that the normalized dimension
does not depend of sparsity, as expected.

\begin{figure}[htb]
\centering
\includegraphics[width=5cm]{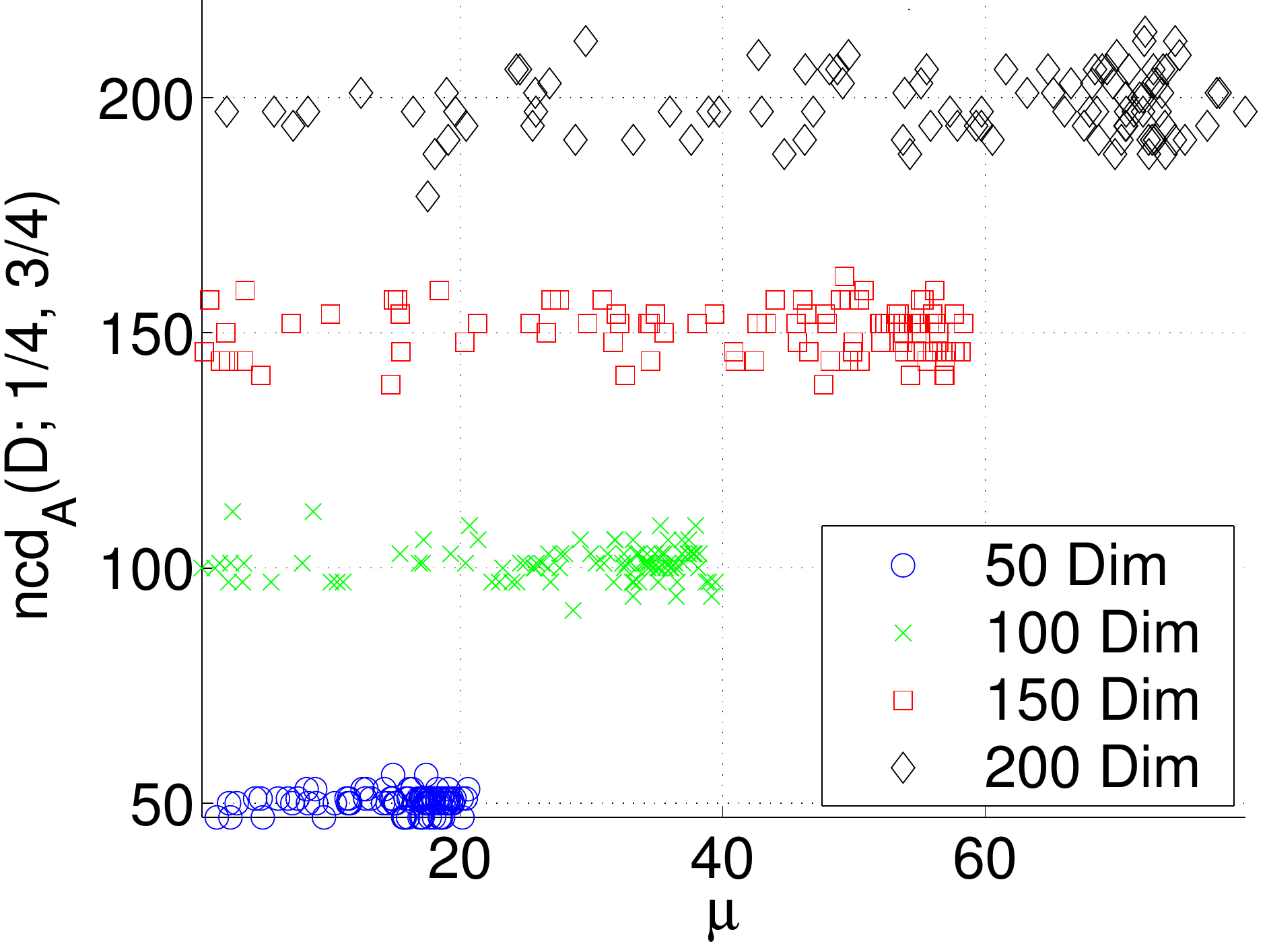}
\caption{Normalized correlation dimension as a function of $\mu$, the
average distance between two random points. The $x$-axis is $\mu =
\mean{Z_D}$ and the $y$-axis is $\ncdA{D}{1/4,3/4}$.}
\label{fig:generated_d}
\end{figure}

Finally, we tested Proposition~\ref{proposition:normalized} by
plotting the normalized dimension as a function of
$\frac{K\cdAnoarg{D}^2}{\cdAnoarg{\ind{D}}^2}.$ We used the
generated datasets from the previous experiment and from our fifth
experiment, as well.  Results given in
Figure~\ref{fig:generated_nfd_vs_base} reveal that the approximation
is good for the used datasets.

\begin{figure}[htb]
\centering
\includegraphics[width=4cm]{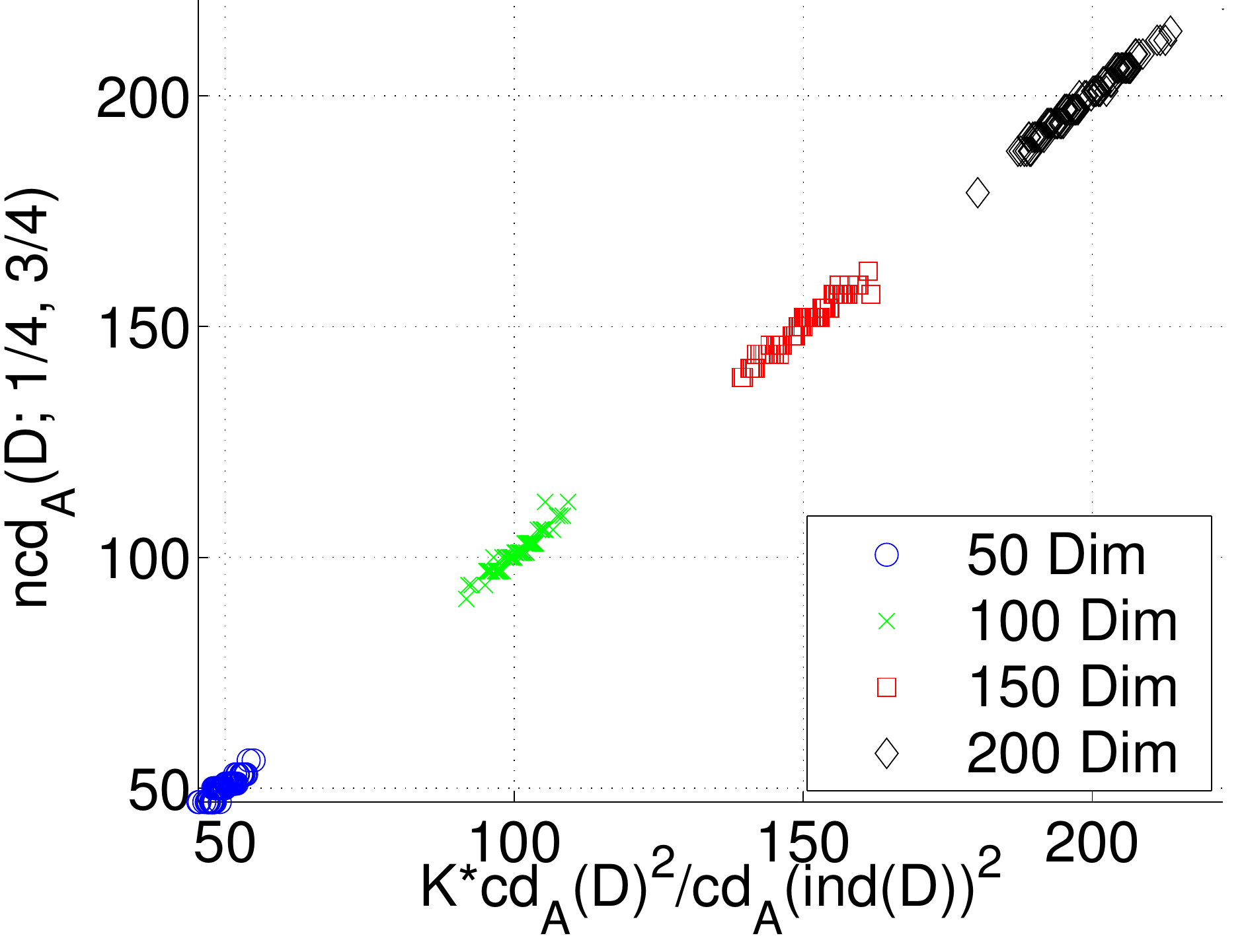}
\includegraphics[width=4cm]{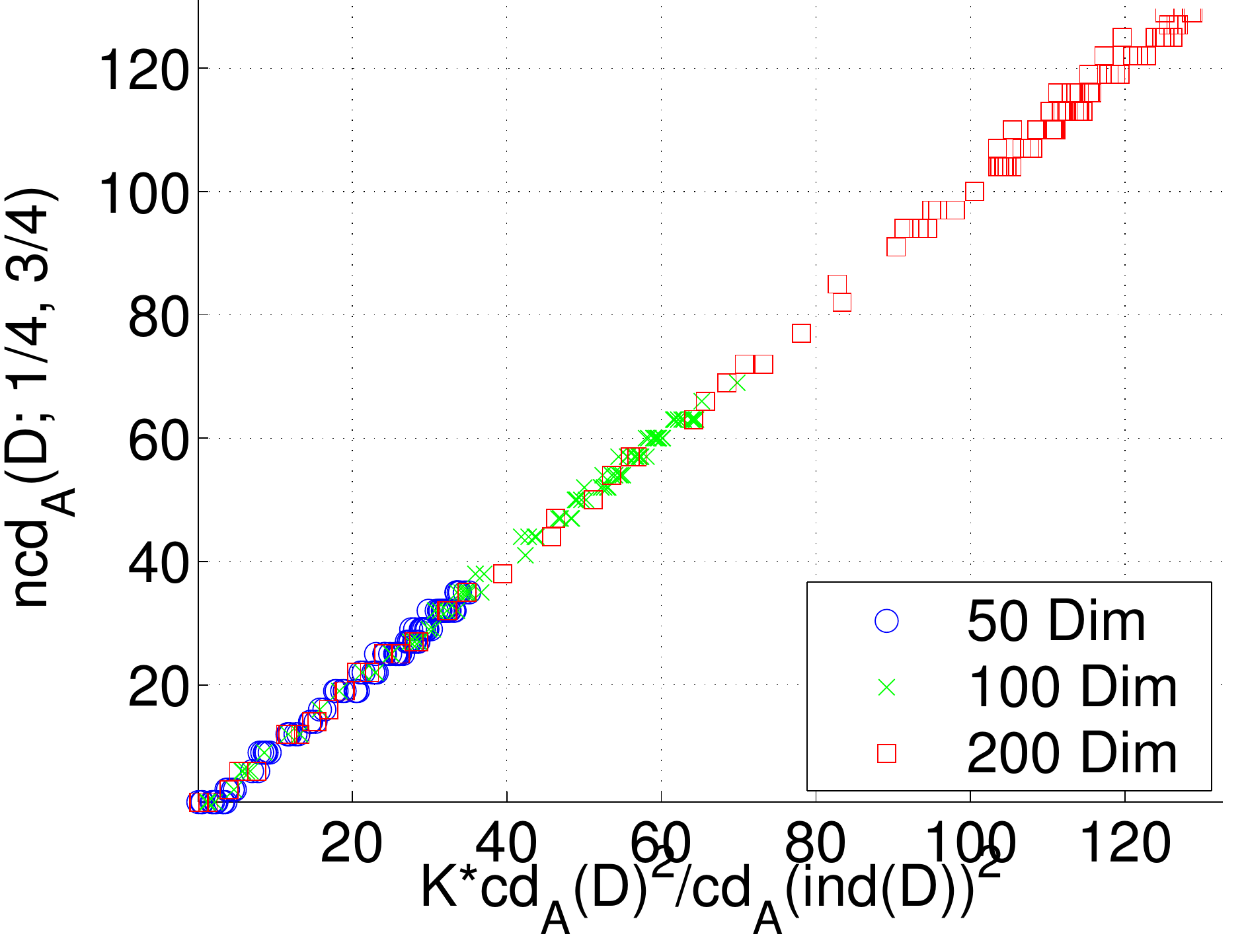}
\caption{Normalized correlation dimension as a function of
$K\cdAnoarg{D}^2/\cdAnoarg{\ind{D}}^2$.  The top figure contains
datasets with independent columns and in the bottom figure adjacent
columns of the datasets depend on each other.}
\label{fig:generated_nfd_vs_base}
\end{figure}

\subsection{Real-world datasets}
In this section we investigate how our dimensions behave with $9$
real-world datasets: \dtname{Accidents}, \dtname{Courses},
\dtname{Kosarak}, \dtname{Paleo}, \dtname{POS}, \dtname{Retail},
\dtname{WebView-1}, \dtname{WebView-2} and \dtname{20 Newsgroups}.
The basic information about the datasets is summarized in
Table~\ref{tab:basic}.

\begin{table}[ht!]
\centering
\caption{The basic statistics of the datasets. The column $K$
corresponds to the the number of columns and the column $N$ to the
number of rows. The last column is the density of 1's in percentages.}
\label{tab:basic}
\begin{tabular}{rrrrr}
\toprule
Data & $K$ & $N$ & \# of 1s & Dens. \\
\midrule
\dtname{Accidents} & $469$ & $340\,183$ & $11\,500\,870$ & $7.21$ \\
\dtname{Courses} & $5\,021$ & $2\,405$ & $64\,743$ & $0.54$ \\
\dtname{Kosarak} & $41\,271$ & $990\,002$ & $8\,019\,015$ & $0.02$ \\
\dtname{Paleo} & $139$ & $501$ & $3\,537$ & $5.08$ \\
\dtname{POS} & $1\,657$ & $515\,597$ & $3\,367\,020$ & $0.39$ \\
\dtname{Retail} & $16\,470$ & $88\,162$ & $908\,576$ & $0.06$ \\
\dtname{WebView-1} & $497$ & $59\,602$ & $149\,639$ & $0.51$ \\
\dtname{WebView-2} & $3\,340$ & $77\,512$ & $358\,278$ & $0.14$ \\
\bottomrule
\end{tabular}
\end{table}

The datasets are as follows. \dtname{20
Newsgroups}\footnote{\url{http://people.csail.mit.edu/jrennie/20Newsgroups/}}
is a collection of approximately $20\,000$ newsgroup documents across 20
different newsgroups~\cite{lang95newsweeder}.  Data in
\dtname{Accidents}\footnote{\url{http://fimi.cs.helsinki.fi/data/accidents.dat.gz}}
were obtained from the Belgian ``Analysis Form for Traffic Accidents''
forms that is filled out by a police officer for each traffic accident
that occurs with injured or deadly wounded casualties on a public road
in Belgium. In total, $340\,183$ traffic accident records are included
in the dataset~\cite{geurts03accidents}.  The datasets
\dtname{POS}\footnote{\url{http://www.ecn.purdue.edu/KDDCUP/data/BMS-POS.dat.gz}},
\dtname{WebView-1}\footnote{\url{http://www.ecn.purdue.edu/KDDCUP/data/BMS-WebView-1.dat.gz}}
and
\dtname{WebView-2}\footnote{\url{http://www.ecn.purdue.edu/KDDCUP/data/BMS-WebView-2.dat.gz}}
were contributed by Blue Martini Software as the KDD Cup 2000
data~\cite{kohavi00bms}. \dtname{POS} contains several years worth of
point-of-sale data from a large electronics retailer.
\dtname{WebView-1} and \dtname{WebView-2} contain several months worth
of click-stream data from two e-commerce web sites.
\dtname{Kosarak}\footnote{\url{http://fimi.cs.helsinki.fi/data/kosarak.dat.gz}}
consists of (anonymized) click-stream data of a Hungarian on-line news
portal.
\dtname{Retail}\footnote{\url{http://fimi.cs.helsinki.fi/data/retail.dat.gz}}
is a retail market basket data supplied by an anonymous Belgian retail
supermarket store~\cite{brijs99retail}.  The dataset
\dtname{Paleo}\footnote{NOW public release 030717 available
from~\cite{fortelius05now}.}  contains information of species fossils
found in specific paleontological sites in
Europe~\cite{fortelius05now}.  \dtname{Courses} is a student--course
dataset of courses completed by the Computer Science students of the
University of Helsinki.

We began our experiments by computing the correlation dimension
$\cdA{D}{1/4, 3/4}$ for each dataset. In order to do that, we needed
to estimate the probabilities $\prob{Z_D < r}$. Since some of the
datasets had a very large amount of rows (see Table~\ref{tab:basic}),
we estimate the probabilities $\prob{Z_D < r}$ by
\begin{equation}
\frac{1}{\abs{D}\abs{D_s}}\sum_{x \in D}\sum_{y \in D_s} I\pr{\abs{x-y} < r},
\label{eq:estimate}
\end{equation}
where $I\pr{\abs{x-y} < r}$ is $1$ if $\abs{x-y} < r$, and $0$
otherwise. The set $D_s$ was a random subset of $D$ containing $10\,000$
points. Since \dtname{Paleo} and \dtname{Courses} have small number of
rows, no sampling is used and $D_s$ was set to $D$ for these
datasets. The evaluation times are discussed in the end of the
section.

We also computed $\cdA{\ind{D}}{1/4, 3/4}$, the correlation dimension
for the datasets with the same column margins but independent columns.
Our goal was to use these numbers to provide empirical
evidence for the theoretical sections. To calculate the dimensions we
need to estimate the probabilities $\prob{Z_{\ind{D}} < r}$. The
estimation was done by generating $10\,000$ points from the distribution
of $Z_{\ind{D}}$.

The dimensions $\cdAnoarg{D}$ and $\cdAnoarg{\ind{D}}$ are given in
Table~\ref{tab:dims}. We see that the dimensions are very small. The
reason is that the datasets are quite sparse. 
We also observe that $\cdAnoarg{\ind{D}}$ is always larger than
$\cdAnoarg{D}$,  which suggests that there is at least some structure
in the datasets.

In addition, we used $\cdAnoarg{\ind{D}}$ to verify
Proposition~\ref{proposition:independent}. This was done by computing
$\mu/\sigma$, where $\mu = \mean{Z_{\ind{D}}}$ and $\sigma^2 =
\var{Z_{\ind{D}}}$. We also computed
\[
\hat{C}(1/4) = \cdA{\ind{D}}{1/4, 3/4}\frac{\sigma}{\mu}.
\]
Note that Proposition~\ref{proposition:independent} suggests that
$\hat{C}(1/4) \approx 0.8$. Table~\ref{tab:dims} shows us that this is
indeed the case.

\begin{table}[ht!]
\centering
\caption{Correlation dimensions of the datasets. In the second column,
$D' = \ind{D}$. The third column is
the fraction $\mu/\sigma$, where $\mu = \mean{Z_{D'}}$ and $\sigma^2 =
\var{Z_{D'}}$. The fourth column is an estimate of the coefficient
$C(1/4)$ obtained by dividing $\cdAnoarg{D'}$ with $\mu/\sigma$.}
\label{tab:dims}
\begin{tabular}{rrrrr}
\toprule
Data & $\cdAnoarg{D}$ & $\cdAnoarg{D'}$ & $\mu/\sigma$ & $\hat{C}\pr{1/4}$ \\
\midrule
\dtname{Accidents} & $3.79$ & $5.50$ & $6.67$ & $0.83$ \\
\dtname{Courses} & $1.56$ & $5.94$ & $7.29$ & $0.82$ \\
\dtname{Kosarak} & $0.96$ & $3.21$ & $3.96$ & $0.81$ \\
\dtname{Paleo} & $1.21$ & $3.20$ & $3.87$ & $0.83$ \\
\dtname{POS} & $1.14$ & $2.98$ & $3.62$ & $0.82$ \\
\dtname{Retail} & $1.33$ & $3.73$ & $4.49$ & $0.83$ \\
\dtname{WebView-1} & $1.27$ & $1.93$ & $2.26$ & $0.86$ \\
\dtname{WebView-2} & $1.01$ & $2.58$ & $3.05$ & $0.85$ \\
\bottomrule
\end{tabular}
\end{table}

We continued our experiments by calculating the normalized correlation
dimension $\ncdA{D}{1/4, 3/4}$.
For this we computed the probability $p$ such that 
\[
\cdA{\ind{K, p}}{\alpha, 1 - \alpha} = \cdA{\ind{D}}{\alpha, 1 - \alpha}
\]
using binary search.  Also, the normalized dimension itself was
computed by using binary search.  The normalized dimensions are given
in Table~\ref{tab:normdims}.

\begin{table}[ht!]
\centering
\caption{Normalized correlation dimensions of the datasets.}
\label{tab:normdims}
\begin{tabular}{rrrrr}
\toprule
Data & $K$ & $\ncd_A$ & $\frac{\ncdAnoarg{D}}{K}$ &  $\frac{K\cdAnoarg{D}^2}{\cdAnoarg{\ind{D}}^2}$ \\
\midrule
\dtname{Accidents} & $469$ & $220$ & $0.47$ & $222.91$ \\
\dtname{Courses} & $5\,021$ & $304$ & $0.06$ & $344.24$ \\
\dtname{Kosarak} & $41\,271$ & $2\,378$ & $0.06$ & $3\,684.78$ \\
\dtname{Paleo} & $139$ & $15$ & $0.11$ & $19.90$ \\
\dtname{POS} & $1\,657$ & $181$ & $0.11$ & $242.91$ \\
\dtname{Retail} & $16\,470$ & $1\,791$ & $0.11$ & $2\,107.52$ \\
\dtname{WebView-1} & $497$ & $190$ & $0.38$ & $214.33$ \\
\dtname{WebView-2} & $3\,340$ & $359$ & $0.11$ & $512.97$ \\
\bottomrule
\end{tabular}
\end{table}

Recall that the normalized correlation dimension of data $D$ indicates
how many variables a dataset $D'$ with independent columns should have
so that the distributional behavior of the pairwise distances between
points would be about the same in $D$ and $D'$.  Thus we note, for
example, that for the \dtname{Paleo} data the dimensionality is about
15, a fraction of $11\%$ of the number of columns in the original data.

The last column in Table~\ref{tab:normdims} is the estimate predicted
by Proposition~\ref{proposition:normalized}. Unlike with the synthetic
datasets (see Section~\ref{sec:expgenerated}), the estimate is poor in
some cases. A probable reason is that the examined datasets are
extremely sparse, and hence the techniques used to obtain
Proposition~\ref{proposition:normalized} are no longer accurate.
This is supported by the observation that \dtname{Accident}
has the best estimate and the largest density.

We also tested the accuracy of Proposition~\ref{proposition:normalized}
with \dtname{20 Newsgroups} dataset\footnote{The messages were converted
into bag-of-words representations and $200$ most informative variables were kept.}.
In Figure~\ref{fig:newses_nfd_vs_base} we plotted the normalized correlation
dimension as a function of the estimate. We see that the approximation
overestimates the dimension but the accuracy is better than in
Table~\ref{tab:normdims}.

\begin{figure}[htb]
\centering
\includegraphics[width=4.1cm]{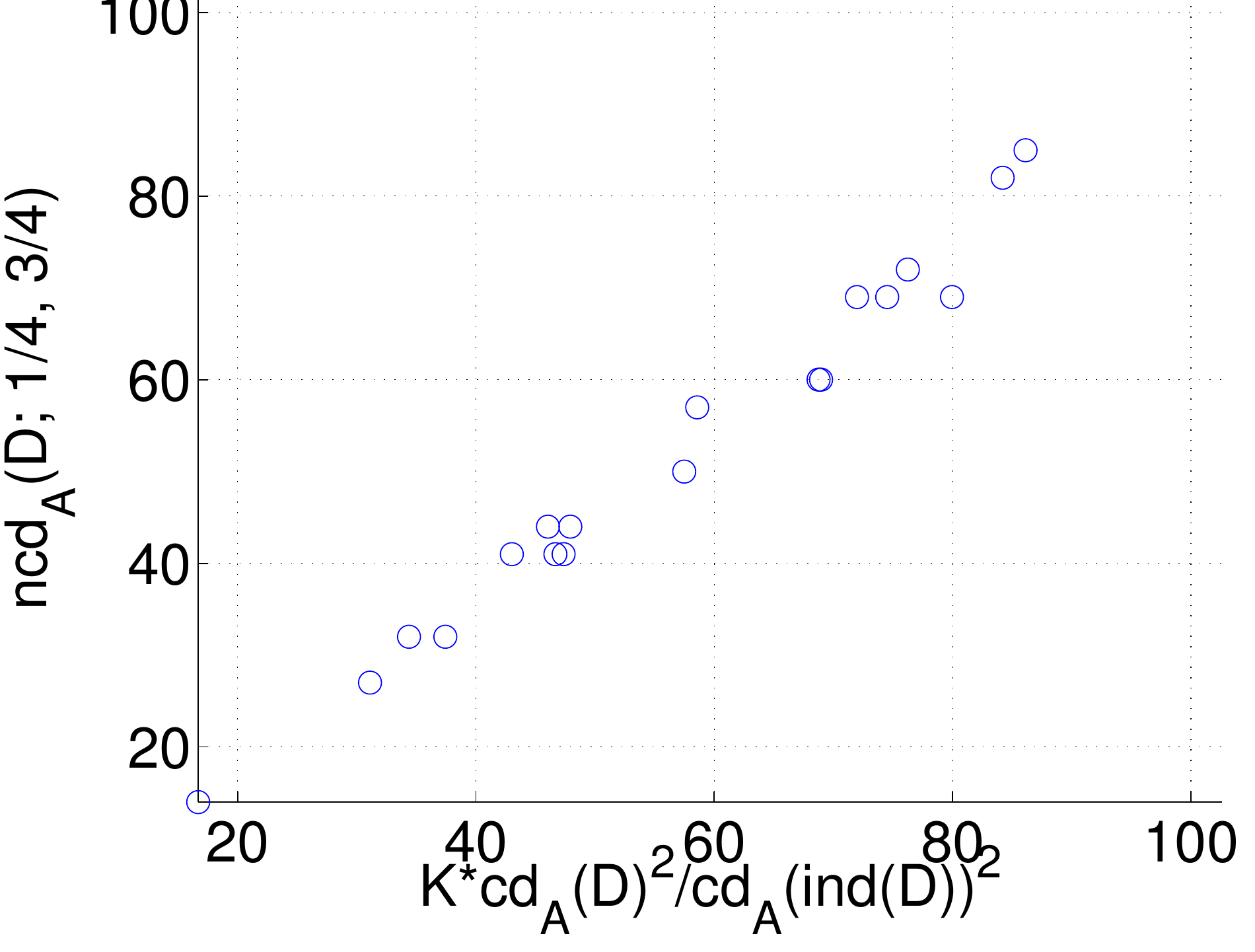}
\caption{Normalized correlation dimension as a function of
$K\cdAnoarg{D}^2/\cdAnoarg{\ind{D}}^2$.  Each point represents one
newsgroup in \dtname{20 Newsgroups} dataset.}
\label{fig:newses_nfd_vs_base}
\end{figure}

We will compare the normalized correlation dimensions against PCA in
the next subsection.

Next we studied the running times of the computation of the
correlation dimension.  Computing the distance of two binary vectors
can be done in $O(M)$ time, where $M$ is the number of 1's in the two
vectors.  Hence, estimating the probabilities using
Equation~\ref{eq:estimate} can be done in $O(\abs{D_s}L)$, where $L$
is the number of 1's in $D$.  We need also to fit the slope to get the
actual dimension, but the time needed for this operation is negligible
compared to the time needed for estimating the probabilities.  Note
that in our setup, the size of $D_s$ was fixed to $10\,000$ (except
for \dtname{Paleo} and \dtname{Courses}). Hence, the running time is
proportional to the number of 1's in a dataset. The running times are
given in Table~\ref{tab:times}.

\begin{table}[ht!]
\centering
\caption{The running times of the correlation dimension in seconds for
various datasets.  Time/\# of 1's: time in milliseconds divided by the
number of 1's in the data.}
\label{tab:times}
\begin{tabular}{rrrr}
\toprule
Data & \# of 1's & Time & Time/\# of 1's \\
\midrule
\dtname{Accidents} &     $11\,500\,870$ & $973$ & $0.085$ \\
\dtname{Courses}   &        $64\,743$ & $9$   & $0.141$ \\
\dtname{Paleo}     &        $3\,537$  & $0.1$ & $0.039$ \\
\dtname{Kosarak}   &      $8\,019\,015$ & $793$ & $0.099$ \\
\dtname{POS}       &      $3\,367\,020$ & $447$ & $0.133$ \\
\dtname{Retail}    &       $908\,576$ & $103$ & $0.113$ \\
\dtname{WebView-1} &       $149\,639$ & $17 $ & $0.114$ \\
\dtname{WebView-2} &       $358\,278$ & $40 $ & $0.112$ \\
\bottomrule
\end{tabular}
\end{table}

\subsection{Correlation Dimension vs.\ other methods}

There are different approaches for measuring the structure of a
dataset.  In this section we study how the normalized dimension
compares with other methods. Namely, we compared the normalized
fractal dimension against the PCA approach and the average correlation
coefficient.

We performed PCA to our datasets and computed the percentage of the
variance explained by the $M$ first PCA variables, where $M =
\ncdAnoarg{D}$.  Additionally, we calculated how many PCA components
are needed to explain $90\%$ of the variance.  The results are given in
Table~\ref{tab:pca}.  We observe that $\ncdAnoarg{D}$ PCA components
explain relatively large portion of the variance for \dtname{Accidents},
\dtname{POS}, and \dtname{WebView-1}, but explains less for
\dtname{Paleo} and \dtname{WebView-2}.

\begin{table}[ht!]
\centering
\caption{Normalized correlation dimensions versus PCA for various
datasets. The second column is the percentage of variance explained by
$\ncdAnoarg{D}$ variables and the third column is the number of
variables needed to explain $90\%$ of the variance.}
\label{tab:pca}
\begin{tabular}{rrrr}
\toprule
Data & $\ncdAnoarg{D}$ & PCA ($\%$) & $90\%$ PCA Dim. \\
\midrule
\dtname{Accidents} & $220$ & $99.83$ & $81$ \\
\dtname{Paleo} & $15$ & $48.50$ & $79$ \\
\dtname{POS} & $181$ & $84.48$ & $246$ \\
\dtname{WebView-1} & $190$ & $87.89$ & $208$ \\
\dtname{WebView-2} & $359$ & $59.73$ & $1\,394$ \\
\bottomrule
\end{tabular}
\end{table}

We next tested how robust the normalized correlation dimension is with
respect to the selection of variables.

Let us first explain the setup of our study.  Since especially PCA is
time-consuming, we created subsets of the data by taking randomly
$1000$ transactions\footnote{except for \dtname{Paleo} which had only
$501$ rows.}.  Let $\proj{M}{D}$ be the dataset obtained from $D$ by
selecting $M$ columns at random.  We used different numbers of
variables $M$ for different datasets. For each dataset $D$ we took
$50$ random subsets $\proj{M}{D}$ and use them for our analysis.

We first performed PCA to each $\proj{M}{D}$ and computed the number
of variables explaining $90\%$ of the variance.  We also computed the
average correlation coefficient for each dataset.  To be more precise,
let $c_{ij}$ be the correlation coefficient between columns $i$ and
$j$ in $\proj{M}{D}$.  We define the average correlation coefficient
to be
\[
\corr{D,M} = \frac{1}{M(M-1)}\sum_{i < j} \abs{c_{ij}}.
\]
Since structure in a dataset is seen as a small normalized fractal
dimension, we expect that $\ncdAnoarg{\proj{N}{D}}$ will correlate
positively with the PCA approach and negatively with the average
correlation coefficient $\corr{\proj{N}{D}}$.  The results are given
in Figure~\ref{fig:projections}.

\begin{figure}[htb]
\centering
\includegraphics[width=6cm]{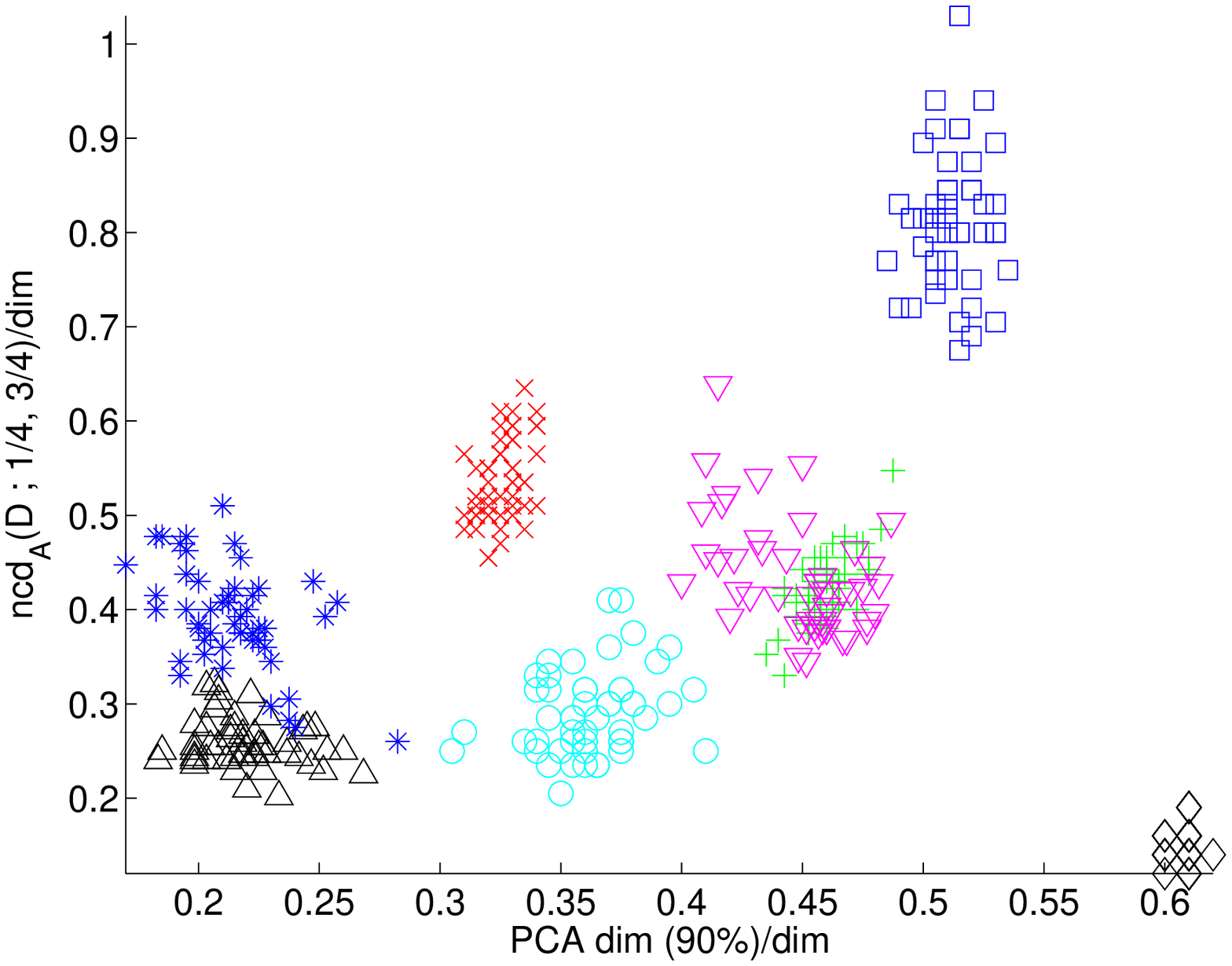}
\includegraphics[width=6cm]{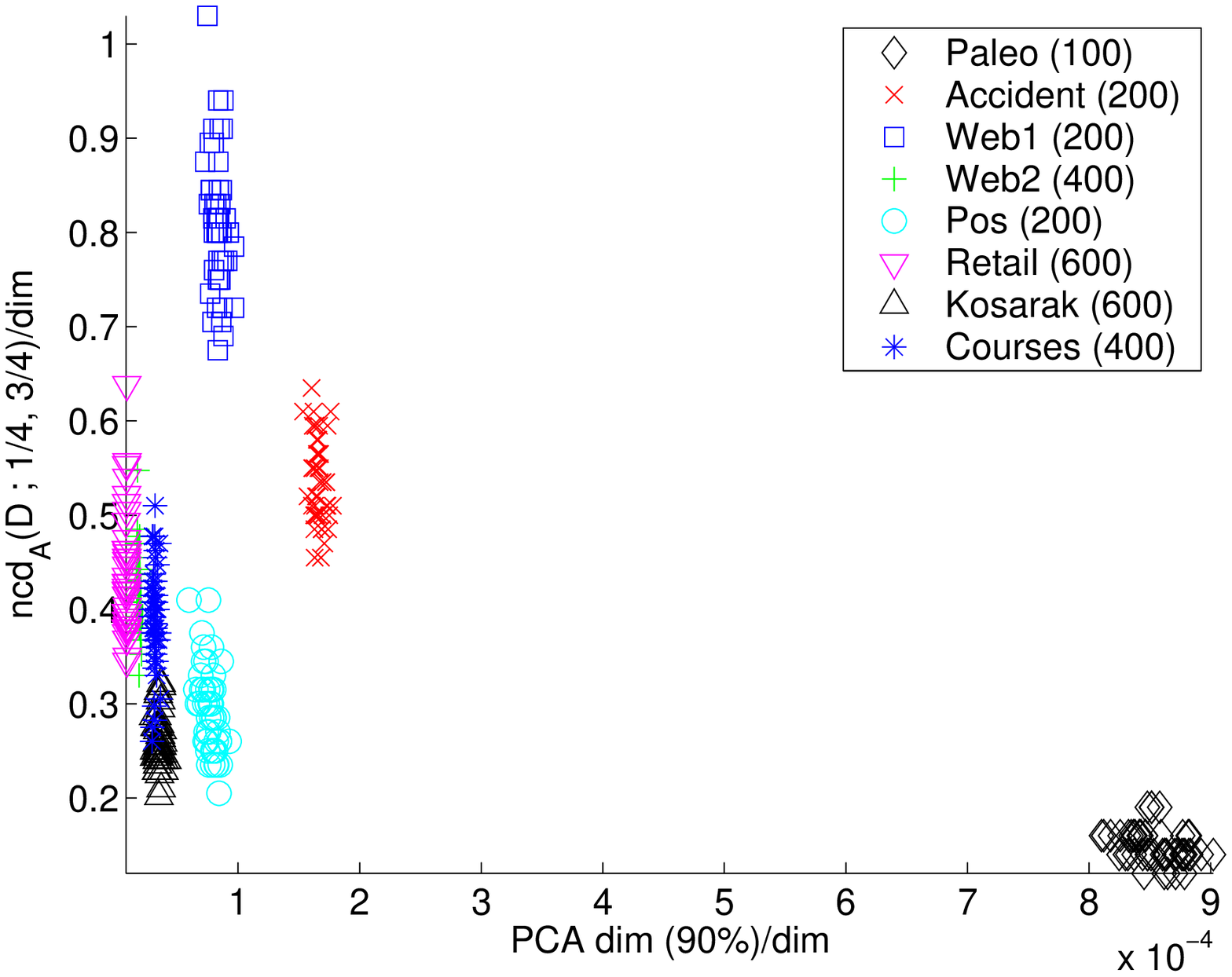}
\caption{Normalized correlation dimension for random subsets of the
data.  The $y$-axis is the normalized correlation dimension (divided
by the number of columns).  In the upper panel the $x$-axis is number
of PCA components needed to explain $90\%$ of the variance, divided by
the number of columns.  In the lower panel the $x$-axis is the average
correlation.  A single point represent one random subset of the
particular dataset.  The number of variables $M$ for the subset is
shown in parentheses in the legend.}
\label{fig:projections}
\end{figure}

We see from Figure~\ref{fig:projections} that there is a large degree
of dependency between these methods: The normalized dimension
correlates positively with PCA dimension and negatively with the
average correlation, as expected.  The most interesting behavior is
observed in the \dtname{Paleo} dataset.  We see that whereas PCA
dimension says that \dtname{Paleo} should have relatively high
dimension, the normalized dimension suggests a very small value.  The
average correlation agrees with the normalized dimension.  Also, we
know that \dtname{Paleo} has a very strong structure (by looking at
the data) so this suggests that the PCA approach overestimates the
intrinsic dimension for \dtname{Paleo}.  This behavior can perhaps be
partly explained also by considering the margins of the datasets.  The
margins of \dtname{Paleo} are relatively homogeneous whereas the
margins of the rest datasets are skewed.

We computed the correlation coefficients between the normalized
correlation dimension and the number of PCA components needed.  We
also computed the correlation for the normalized correlation dimension
and the average correlations.  These correlations coefficients were
computed for each dataset $D$ separately (recall that there were $50$
random subsets for each $D$).  Also, we calculated the correlations
for the case when all the datasets were considered simultaneously.  In
addition, since \dtname{Paleo} behaved like an outlier, we computed
the coefficients for the case where all datasets except \dtname{Paleo}
were present.  The results are given in Table~\ref{tab:corr_table} and
they support the conclusions we draw from
Figure~\ref{fig:projections}.

\begin{table}[ht!]
\centering
\caption{Correlations between normalized dimension against PCA
and average correlation.  Each row represents $50$ random
subsets of the particular dataset (see Figure~\ref{fig:projections}).
The second last row contains the correlations obtained by using the
subsets from all the datasets simultaneously.  The last row is similar
to the second last row except \dtname{Paleo} dataset was omitted.}
\label{tab:corr_table}
\begin{tabular}{rrr}
\toprule
& \multicolumn{2}{c}{$\ncdA{D}{1/4, 3/4}$ vs.} \\
\cmidrule{2-3}
Data & PCA ($90\%$) & $\corr{D}$ \\
\midrule
\dtname{Accident} & $0.44$ & $-0.23$ \\
\dtname{Courses} & $-0.51$ & $-0.01$ \\
\dtname{Kosarak} & $-0.21$ & $-0.02$ \\
\dtname{Paleo} & $0.10$ & $-0.31$ \\
\dtname{POS} & $0.27$ & $-0.54$ \\
\dtname{Retail} & $-0.48$ & $-0.18$ \\
\dtname{WebView-1} & $0.06$ & $-0.33$ \\
\dtname{WebView-2} & $0.70$ & $-0.49$ \\
\midrule
Total & $0.09$ & $-0.44$ \\
Total without \dtname{Paleo} & $0.60$ & $0.13$ \\
\bottomrule
\end{tabular}
\end{table}

\subsection{Correlation dimension for subgroups generated by clustering}

In this section we study how the correlation dimension of a
dataset is related to the dimensions of its subsets.  We consider the case
where the subsets are generated by clustering. The connection of the dimensions
of the clusters and the dataset itself is not trivial.

We first studied the subject empirically using the \dtname{Paleo}
dataset. There is a cluster structure in \dtname{Paleo}, and hence we
used $k$-means to find $3$ clusters and computed the dimensions for
these clusters. The dimensions are given in
Table~\ref{tab:paleoclust}.

\begin{table}[ht!]
\centering
\caption{Correlation dimension and normalized correlation dimension for \dtname{Paleo} data and its clusters. The clusters were obtained using the $k$-means algorithm.}
\label{tab:paleoclust}
\begin{tabular}{rrrr}
\toprule
Data & \# of rows & $\cdAnoarg{D}$ & $\ncdAnoarg{D}$ \\
\midrule
Cluster 1 & $51$ & $2.56$ & $37$ \\
Cluster 2 & $378$ & $1.60$ & $50$ \\
Cluster 3 & $72$ & $2.53$ & $46$ \\
Average & -- & $2.23$ & $44.33$ \\
Whole data & $501$ & $1.21$ & $15$ \\
\bottomrule
\end{tabular}
\end{table}

We also conducted experiments with \dtname{20 Newsgroups}.  First, we
calculated the normalized correlation dimension for each separate
newsgroup.  Then we created mixed datasets from $4$ newsgroups, one of
religious, one about computers, one recreational, and one science
newsgroup.  There were $240$ such datasets in total.  We computed the
dimensions for each mix and compare them to the average dimensions of
the newsgroups contained in the mixing.  The scatterplot of the
dimensions is given in Figure~\ref{fig:clusters}.

\begin{figure}[htb]
\centering
\includegraphics[width=4.1cm]{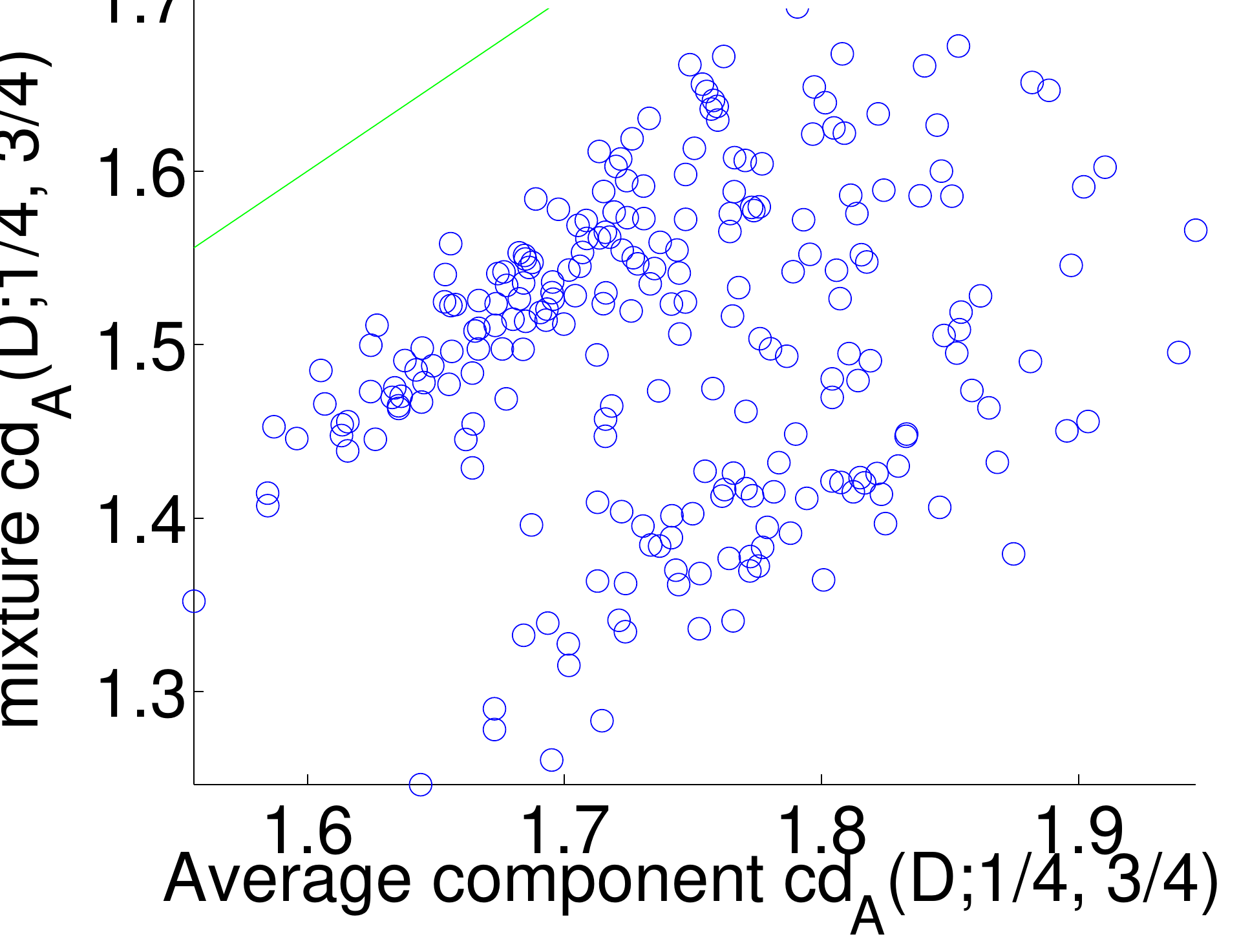}
\includegraphics[width=4.1cm]{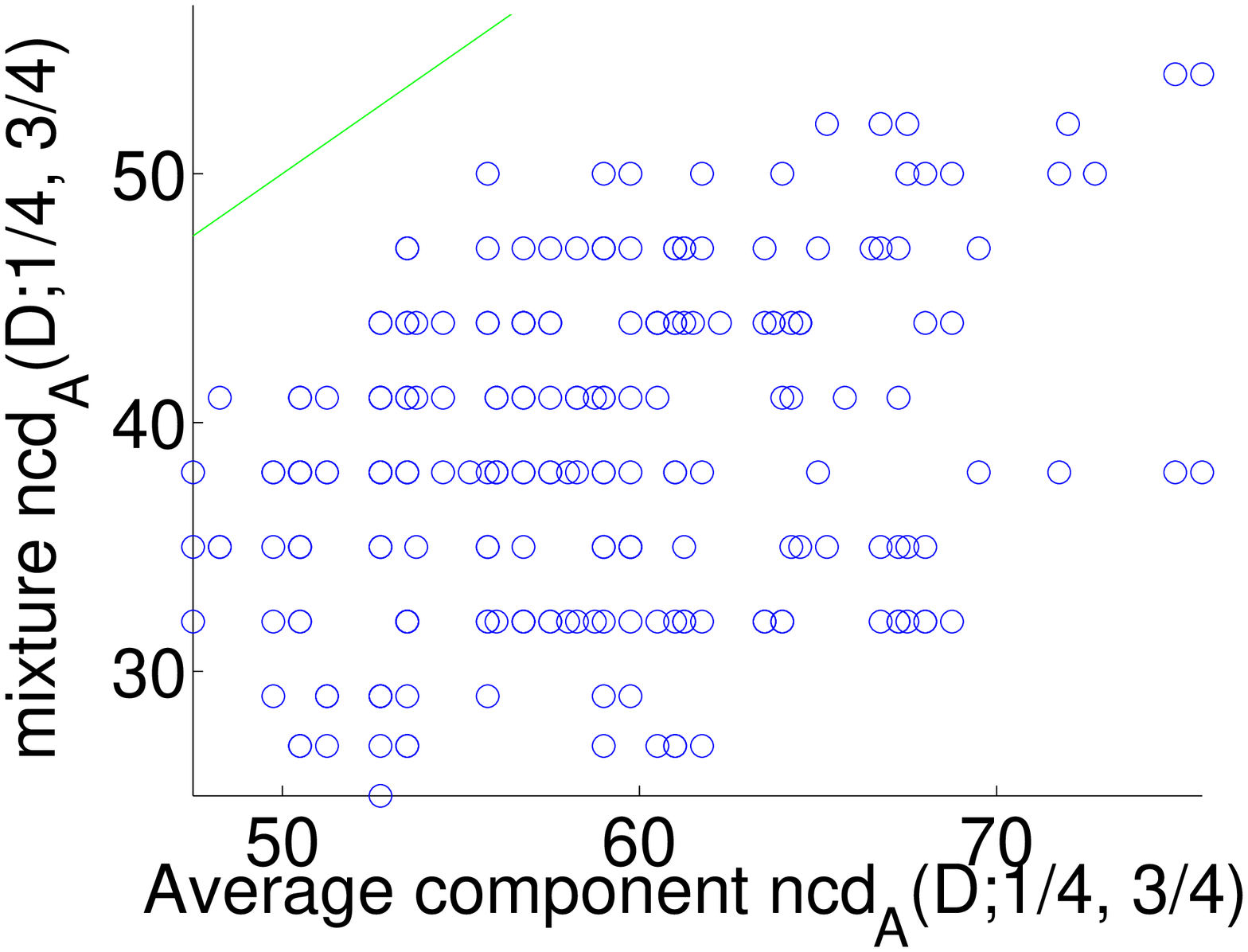}
\caption{Dimensions for cluster-structured data. Each
point represents a mixture of $4$ different newsgroups. The left figure
contains the correlation dimension and the right figure contains the
normalized correlation dimension. The $x$-axis is the average
dimension of the components used in a mixture and the $y$-axis is the
dimension of the mixture itself. }
\label{fig:clusters}
\end{figure}

From the results we see that for our datasets the clusters tend to
have higher dimensions than the whole dataset. We also see from
Figure~\ref{fig:clusters} that there is a positive correlation between
the dimension of a cluster and the dimension of the whole dataset.

\section{Related work}
\label{sec:related}

There has been a significant amount of work in defining the concept of
dimensionality in datasets.  Even though most of the methods can be
adapted to the case of binary data, they are not specifically tailored
for it.  For instance, many methods assume real-valued numbers and
they compute vectors/components that have negative or continuous
values that are difficult to interpret.  Such methods include, PCA,
SVD, and non-negative matrix factorization
(NMF)~\cite{jolliffe02pca,lee00nmf}.  Other methods such as
multinomial PCA (mPCA)~\cite{buntine03mpca}, and latent Dirichlet
allocation (LDA)~\cite{blei03lda} assume specific probabilistic models
of generating the data and the task is to discover latent components
in the data rather than reasoning about the intrinsic dimensionality
of the data.  Methods for exact and approximate decompositions of
binary matrices into binary matrices in Boolean semiring have also
been proposed~\cite{geerts04tiling,miettinen06dbp,monson95survey}, but
similarly to mPCA and LDA, they focus on finding components instead of
the intrinsic dimensionality.

The concept of fractal dimension has found many applications in the
database and data mining communities, such as, making nearest neighbor
computations more efficient~\cite{pagel00deflating}, speeding up feature selection
methods~\cite{jr00fast}, outlier detection~\cite{papadimitriou03loci},
and performing clustering tasks based on the local dimensionality of
the data points~\cite{gionis05dic}.

Many different notions of complexity of binary datasets have been
proposed and used in various contexts, for instance
VC-dimension~\cite{anthony97colt},
discrepancy~\cite{chazelle00discrepancy}, Kolmogorov
complexity~\cite{li97kolmogorov} and entropy-based
concepts~\cite{cover91it,palmerini04tdb}.  In some of the above cases,
such as Kolmogorov complexity and entropy methods, there is no direct
interpretation of the measures as a notion of dimensionality of the
data as they are measures of compressibility.  VC-dimension measures
the dimensionality of discrete data, but it is rather conservative as
a binary dataset having VC-dimension $d$ means that there are $d$
columns such that the projection of the dataset on those coordinates
results all possible bit vectors of length $d$.  Hence, VC-dimension
does not make any difference between datasets $\{0,1\}^d$ and $\{ x
\in \{0,1\}^K : \sum_{i=1}^K x_i \leq d\}$, although there is a great
difference when $d<<K$.  Furthermore, computing the VC-dimension of a
given dataset is a difficult problem~\cite{papadimitriou96vc}.

Related is also the work on random projections and dimensionality
reductions, such as in~\cite{achlioptas03projections}, but this line
of research has different goals than ours.  Finally, methods such as
multidimensional scaling (MDS)~\cite{kruskal64multidimensional} and
Isomap~\cite{tenembaum00reduction} focus on embedding the data (not
necessarily binary) in low-dimensional spaces with small distortion,
mainly for visualization purposes.
\section{Concluding remarks}
\label{section:conclusions}

We have given a definition of the effective dimension of a binary
dataset.  The definition is based on ideas from fractal dimensions: We
studied how the distribution of the distances between two random data
points from the dataset behaves, and fit a slope to the log-log set of
points.  We defined the notion of normalized correlation dimension.
It measures the number of dimensions of the appropriate density that a
dataset with independent variables should have to have the same
correlation dimension as the original dataset.

We studied the behavior of correlation dimension and normalized
correlation dimension, both theoretically and empirically.  Under
certain simplifying assumptions, we were able to prove approximations
for correlation dimension, and we verified these results using
synthetic data.

Our empirical results for real data show that different datasets have
clearly very different normalized correlation dimensions.  In general,
the normalized correlation dimension correlates with the number of PCA
components that are needed to explain $90\%$ of the variance in the
data, but there are also intriguing differences.

Traditionally, dimension means the degrees of freedom in the dataset.
One can consider a dataset embedded into a high-dimensional space by
some (smooth) embedding map.  Traditional methods such as PCA try to
negate this embedding. Fractal dimensions, however, are based on
different notion, the behavior of the volume of data as a function of
neighborhoods. This means that the methods in this paper do not
provide a mapping to a lower-dimensional space, and hence traditional
applications, such as feature reduction, are not (directly) possible.
However, our study shows that fractal dimensions have
promising properties and we believe that these dimensions are
important as such.

A fundamental difference between the normalized correlation dimension
and PCA is the following.  For a dataset with independent columns PCA
has no effect and selects the columns that have the highest variance
until some selected percentage of the variance is explained.  Thus,
the number of PCA components needed depends on the margins of the
columns.  On the other hand, the normalized correlation dimension is
always equal to the number of variables for data with independent
columns.

Obviously, several open problems remain.  It would be interesting to
have more general results about the theoretical behavior of the
normalized correlation dimension.  In the empirical side the study of
the correlation dimensions of the data and its subsets seems to be a
promising direction.

\bibliographystyle{latex8}
\bibliography{fractal}

\begin{thebibliography}{10}\setlength{\itemsep}{-1ex}\small

\bibitem{achlioptas03projections}
D.~Achlioptas.
\newblock Database-friendly random projections: Johnson-lindenstrauss with
  binary coins.
\newblock {\em Journal of Computer and System Sciences}, 66(4):671--687, 2003.

\bibitem{anthony97colt}
M.~Anthony and N.~Biggs.
\newblock {\em Computational Learning Theory: An Introduction}.
\newblock Cambridge Tracts in Theoretical Computer Science. Cambridge
  University Press, 1997.

\bibitem{barnsley88fractals}
M.~Barnsley.
\newblock {\em Fractals Everywhere}.
\newblock Academic Press, 1988.

\bibitem{blei03lda}
D.~M. Blei, A.~Y. Ng, and M.~I. Jordan.
\newblock Latent dirichlet allocation.
\newblock {\em Journal of Machine Learning Research}, 3:993--1022, 2003.

\bibitem{brijs99retail}
T.~Brijs, G.~Swinnen, K.~Vanhoof, and G.~Wets.
\newblock Using association rules for product assortment decisions: A case
  study.
\newblock In {\em Proceedings of the Fifth ACM SIGKDD International Conference
  on Knowledge Discovery and Data Mining, August 15-18, 1999, San Diego, CA,
  USA}, pages 254--260. ACM, 1999.

\bibitem{buntine03mpca}
W.~Buntine and S.~Perttu.
\newblock Is multinomial {PCA} multi-faceted clustering or dimensionality
  reduction?
\newblock In C.~Bishop and B.~Frey, editors, {\em Proceedings of the Ninth
  International Workshop on Artificial Intelligence and Statistics}, pages
  300--307, 2003.

\bibitem{chazelle00discrepancy}
B.~Chazelle.
\newblock {\em The Discrepancy Method}.
\newblock Cambridge University Press, 2000.

\bibitem{cover91it}
T.~M. Cover and J.~A. Thomas.
\newblock {\em Elements of Information Theory}.
\newblock John Wiley, 1991.

\bibitem{faloutsos94beyond}
C.~Faloutsos and I.~Kamel.
\newblock Beyond uniformity and independence: Analysis of {R}-trees using the
  concept of fractal dimension.
\newblock In {\em PODS}, pages 4--13. ACM Press, 1994.

\bibitem{fortelius05now}
M.~Fortelius.
\newblock Neogene of the old world database of fossil mammals ({NOW}).
\newblock University of Helsinki, \url{http://www.helsinki.fi/science/now/},
  2005.

\bibitem{geerts04tiling}
F.~Geerts, B.~Goethals, and T.~Mielik{\"a}inen.
\newblock Tiling databases.
\newblock In E.~Suzuki and S.~Arikawa, editors, {\em Discovery Science}, volume
  3245 of {\em Lecture Notes in Computer Science}, pages 278--289. Springer,
  2004.

\bibitem{geurts03accidents}
K.~Geurts, G.~Wets, T.~Brijs, and K.~Vanhoof.
\newblock Profiling high frequency accident locations using association rules.
\newblock In {\em Proceedings of the 82nd Annual Transportation Research Board,
  Washington DC. (USA), January 12-16}, 2003.

\bibitem{gionis05dic}
A.~Gionis, A.~Hinneburg, S.~Papadimitriou, and P.~Tsaparas.
\newblock Dimension induced clustering.
\newblock In R.~Grossman, R.~Bayardo, and K.~P. Bennett, editors, {\em KDD},
  pages 51--60. ACM, 2005.

\bibitem{jolliffe02pca}
I.~Jolliffe.
\newblock {\em Principal Component Analysis}.
\newblock Springer Series in Statistics. Springer, 2nd edition, 2002.

\bibitem{kegl02intristic}
B.~K\'{e}gl.
\newblock Intrinsic dimension estimation using packing numbers.
\newblock In S.~T. S.~Becker and K.~Obermayer, editors, {\em Advances in Neural
  Information Processing Systems 15}, pages 681--688, 2003.

\bibitem{kohavi00bms}
R.~Kohavi, C.~Brodley, B.~Frasca, L.~Mason, and Z.~Zheng.
\newblock {KDD-Cup} 2000 organizers' report: Peeling the onion.
\newblock {\em SIGKDD Explorations}, 2(2):86--98, 2000.

\bibitem{kruskal64multidimensional}
J.~B. Kruskal.
\newblock Multidimensional scaling by optimizing goodness of $t$ to a nonmetric
  hypothesis.
\newblock {\em Psychometrica}, 29:1--26, 1964.

\bibitem{lang95newsweeder}
K.~Lang.
\newblock Newsweeder: Learning to filter netnews.
\newblock In {\em Proceedings of the Twelfth International Conference on
  Machine Learning}, pages 331--339, 1995.

\bibitem{lee00nmf}
D.~D. Lee and H.~S. Seung.
\newblock Algorithms for non-negative matrix factorization.
\newblock In T.~K. Leen, T.~G. Dietterich, and V.~Tresp, editors, {\em Advances
  in Neural Information Processing Systems 13}, pages 556--562, 2001.

\bibitem{li97kolmogorov}
M.~Li and P.~Vit{\'a}nyi.
\newblock {\em An Introduction to Kolmogorov Complexity and Its Applications}.
\newblock Texts in Computer Science. Springer-Verlag, 3rd edition, 1997.

\bibitem{miettinen06dbp}
P.~Miettinen, T.~Mielik{\"a}inen, A.~Gionis, G.~Das, and H.~Mannila.
\newblock The discrete basis problem.
\newblock In J.~F{\"u}rnkranz, T.~Scheffer, and M.~Spiliopoulou, editors, {\em
  Knowledge Discovery in Databases: PKDD 2006 -- 10th European Conference on
  Principles and Practice of Knowledge Discovery in Databases, Berlin, Germany,
  Croatia, September 18--22, 2006, Proceedings}, Lecture Notes in Computer
  Science. Springer, 2006.

\bibitem{monson95survey}
S.~D. Monson, N.~J. Pullman, and R.~Rees.
\newblock A survey of clique and biclique coverings and factorizations of
  $(0,1)$-matrices.
\newblock {\em Bulletin of the ICA}, 14:17--86, 1995.

\bibitem{ott97chaos}
E.~Ott.
\newblock {\em Chaos in Dynamical Systems}.
\newblock Cambridge University Press, 1997.

\bibitem{pagel00deflating}
B.-U. Pagel, F.~Korn, and C.~Faloutsos.
\newblock Deflating the dimensionality curse using multiple fractal dimensions.
\newblock In {\em ICDE}, pages 589--598. IEEE Computer Society, 2000.

\bibitem{palmerini04tdb}
P.~Palmerini, S.~Orlando, and R.~Perego.
\newblock Statistical properties of transactional databases.
\newblock In H.~Haddad, A.~Omicini, R.~L. Wainwright, and L.~M. Liebrock,
  editors, {\em SAC}, pages 515--519. ACM, 2004.

\bibitem{papadimitriou96vc}
C.~H. Papadimitriou and M.~Yannakakis.
\newblock On limited nondeterminism and the complexity of the {V-C} dimension.
\newblock {\em Journal of Computer and System Sciences}, 53(2):161--170, 1996.

\bibitem{papadimitriou03loci}
S.~Papadimitriou, H.~Kitagawa, P.~B. Gibbons, and C.~Faloutsos.
\newblock {LOCI:} fast outlier detection using the local correlation integral.
\newblock In U.~Dayal, K.~Ramamritham, and T.~M. Vijayaraman, editors, {\em
  ICDE}, pages 315--326. IEEE Computer Society, 2003.

\bibitem{tenembaum00reduction}
J.~B. Tenenbaum, V.~de~Silva, and J.~C. Langford.
\newblock A global geometric framework for nonlinear dimensionality reduction.
\newblock {\em Science}, 290(5500):2319--2323, 2000.

\bibitem{jr00fast}
C.~Traina~Jr., A.~J.~M. Traina, L.~Wu, and C.~Faloutsos.
\newblock Fast feature selection using fractal dimension.
\newblock In K.~Becker, A.~A. de~Souza, D.~Y. de~Souza~Fernandes, and D.~C.~F.
  Batista, editors, {\em SBBD}, pages 158--171. CEFET-PB, 2000.

\end{thebibliography}
\end{document}